\begin{document}

\title{A Robust Color Edge Detection Algorithm Based on Quaternion Hardy Filter}


\author{Wenshan Bi  \and Dong Cheng   \and {Wankai Liu} \and {Kit Ian Kou}
}


\institute{Wenshan Bi \at
              Department of Mathematics, Faculty of Science and Technology, University of Macau, Macau, China \\
              \email{wenshan0608@163.com}           
           \and
           Dong Cheng \at
              Research Center for Mathematics and Mathematics Education, Beijing Normal University, Zhuhai, 519087, China
              \email{chengdong720@163.com}
               \and
           Wankai Liu \at
              School of Mathematics and Quantitative Economics, Shandong University of Finance and Economics, Jinan, 250014, China
              \email{zjnulwk@163.com}
               \and     
           Corresponding author: Kit Ian Kou \at
              Department of Mathematics, Faculty of Science and Technology, University of Macau, Macau, China\\
              \email{kikou@um.edu.mo}
}

\date{Received: date / Accepted: date}

\maketitle

\begin{abstract}
This paper presents a robust filter called quaternion Hardy filter (QHF) for color image edge detection. The  QHF  can be capable of color edge feature enhancement and noise resistance. It is flexible to use  QHF by selecting suitable parameters to handle  different levels of noise.   In particular, the quaternion analytic signal, which is an effective tool in color image processing, can also be produced by quaternion Hardy filtering with specific parameters. Based on the QHF and the improved Di Zenzo gradient operator, a novel color edge detection algorithm is proposed. Importantly, it can be efficiently implemented by using the fast discrete quaternion Fourier transform technique.
{From the experimental results,
we conclude that the minimum PSNR improvement rate is $2.3\%$ and minimum SSIM improvement rate  is $30.2\%$ on the Dataset 3.}
The experiments demonstrate that the proposed algorithm outperforms several widely used algorithms.
\keywords{Quaternion Hardy filter \and Color image edge detection \and Quaternion analytic signal \and Discrete quaternion Fourier transform}
\end{abstract}

\section{Introduction}
\label{sec:intro}
{Edge} detection is a fundamental problem in computer vision \cite{bibre2, bibre2a, bibre3}.
{It has a wide range of applications, including  medical imaging \cite{medical}, lane detection \cite{lane1},  face recognition \cite{face}, weed detection \cite{weed} and deep learning \cite{bibre1}, the well known method, plays an essential role in image processing and data analysis \cite{ZT1}-\cite{d2}.}
\subsection{Related works}
{Canny, differential phase congruence (DPC), and modified differential phase congruence (MDPC) detectors have drawn wide attention and achieved great success in gray-scale edge detection \cite{Sobel}-\cite{MDPC}.}
Another  approach of edge detection is detecting edges independently in each of the three color channels, and then obtain the final edge map by combining three single channel edge results  according to some proposed rules \cite{separate}.
{For example, fast Fourier transformation can also process color images, but actually requires more multiplications and additions than the quaternion Fourier transform \cite{bib19}.}
{However, these methods ignore the relationship between different color channels of the image. Instead of separately computing the scaled gradient for each color component, a multi-channel gradient edge detector has been widely used since it was proposed by Di Zenzo \cite{B19}. Later, Jin \cite{IDZ} solved the uncertainty of the Di Zenzo gradient direction and presented an improved Di Zenzo (IDZ) gradient operator, which achieves a significant improvement over DZ. However, the IDZ algorithm is not suitable for the edge detection of noisy images.}

\begin{figure}[th]
 \centering
 \includegraphics[height=4.15cm,width=8.3cm]{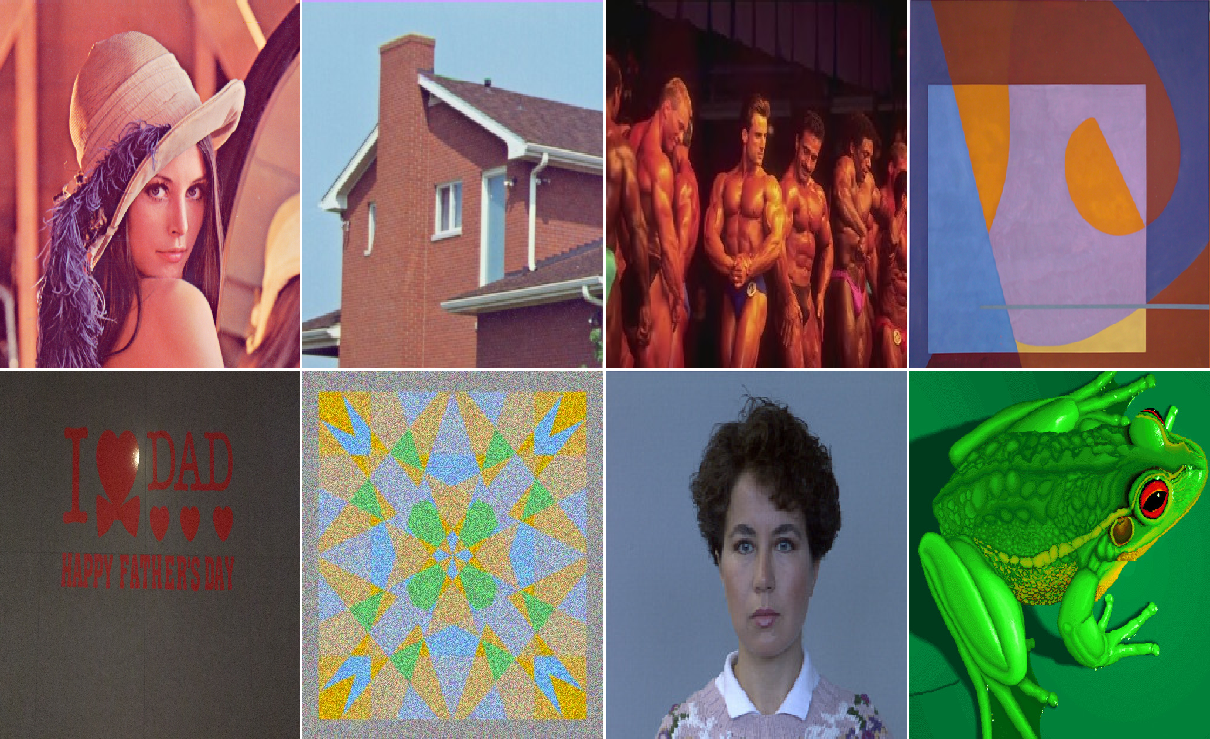}
  \caption{{The test images (namely Classic). From left to right and top to bottom: Lena, House, Men, T1, T2, T3, Cara and Frog.\label{TU1} \textbf{The figure is viewed better in zoomed PDF.}}}
\end{figure}

A growing number of research \cite{B21}-\cite{B24} indicates that quaternions are well adapted to color image processing by encoding color channels into three imaginary parts. The quaternion analytic signals are the boundary values of the functions in quaternion Hardy space \cite{B25}. Based on the quaternion analytic signal, researchers in \cite{B26} proposed some phase-based algorithms to detect the edge map of gray-scale images. It is shown that the introducing of quaternion analytic signal can reduce the influence of noise on edge detection results. It should be noted that although the tool of quaternion was applied, the algorithms (QDPC and QDPA) in \cite{B26} only considered the gray-scale images. Based on the quaternion Hardy filter and the improved Di Zenzo gradient operator, we propose a novel  edge detection algorithm,  which can be applied to color image.

\subsection{Paper contributions}
The contributions of this paper are summarized as follows.
\begin{enumerate}
  \item We propose a novel filter, named quaternion Hardy filter (QHF), for color image processing. Compared with quaternion analytic signal, our method has a better performance due to the flexible parameter selection of QHF.
  \item Based on the QHF and the improved Di Zenzo gradient operator, we propose a robust color edge detection algorithm. It can enhance the color edge in a holistic manner by extracting the main features of the color image.
  \item We set up a series of experiments to verify the denoising performance of the proposed algorithm in various environments. Visual and quantitative analysis are both conducted. Three widely used edge detection algorithms, Canny, Sobel and Prewitt, and two recent edge detection algorithms, QDPC, QDPA, DPC, and MDPC, are compared with the proposed algorithm.
      In terms of peak SNR (PSNR) and similarity index measure (SSIM), the proposed algorithm presents  the superiority in  edge detection.
\end{enumerate}
\subsection{Paper outlines}
The rest of the paper is organized as follows. Section \ref{sec:Pre} recalls some preliminaries of the improved Di Zenzo gradient operator, quaternions, quaternion Fourier transform, quaternion Hardy space and quaternion analytic signal. Section \ref{sec:Pro} presents the main result of the paper, it defines the novel algorithm for color-based edge detection of real-world images. Experimental results of the proposed algorithm are shown in Section \ref{sec:Exp}. Conclusions and discussions of the future work are drawn in Section \ref{sec:Con}.
\section{Preliminaries}
\label{sec:Pre}
\begin{figure}[t]
 \centering
 \includegraphics[height=2.075cm,width=6.225cm]{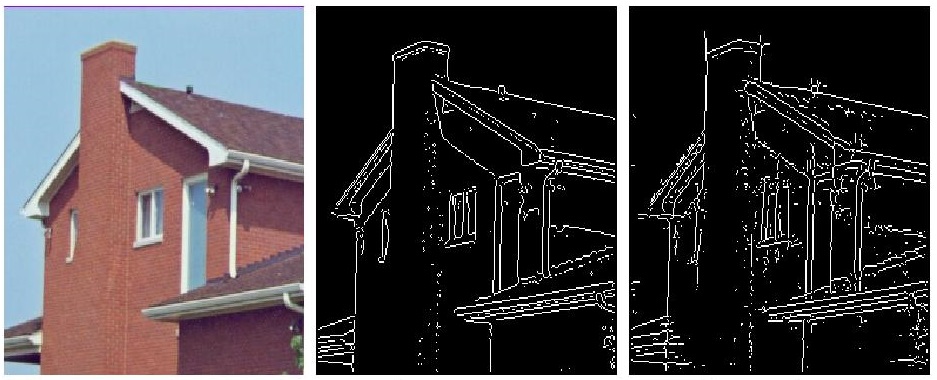}
  \caption{The noiseless House image (left). The edge maps obtained by IDZ gradient algorithm (middle) and the proposed algorithm (right). \textbf{The figure is viewed better in zoomed PDF.}}
  \label{house1}
\end{figure}
This part recalls some preparatory knowledge of the improved Di Zenzo gradient operator \cite{IDZ}, quaternions, quaternion Fourier transform \cite{B27}, quaternion Hardy space \cite{B26}, and quaternion analytic signal \cite{B28} which will be used throughout the paper.
\subsection{The improved Di Zenzo gradient operator}
\label{Pre-IDZ}
{In this section, we recall the improved Di Zenzo gradient operator, namely the IDZ gradient operator. It will be combined with the quaternion Hardy filter  to establish the novel edge detection algorithm in the next section.}

Let $f$  be an $M\times{N}$ color image that maps a point $(x_1,x_2)$  to a vector $(f_1(x_1,x_2),f_2(x_1,x_2),f_3(x_1,x_2))$.
Then the square of the variation of $f$ at the position $(x_1,x_2)$ with the distance $\gamma$ in the direction $\theta$ is given by
\begin{eqnarray}\label{df2}
\begin{aligned}
df^2& =\|f(x_1+\gamma{\cos\theta,x_2+\gamma{\sin\theta})-f(x_1,x_2)}\|_2^2 \\
&\approx\sum\limits_{i=1}^{3}\left(\frac{\partial{f_i}}{\partial{x_1}}\gamma\cos\theta+
\frac{\partial{f_i}}{\partial{x_2}}\gamma\sin\theta\right)^2\\
&=\gamma^2f(\theta),
\end{aligned}
\end{eqnarray}
{where $\|\cdot\|$ represents 2-norm, $\|\mathbf{X}\|_2=\sqrt{{x_1}^2+\ldots+{x_n}^2}$, $\mathbf{X}=(x_1,\ldots,x_n)$, and}
\begin{eqnarray}\label{ftheta}
 \begin{aligned}
f(\theta) =&2\sum\limits_{i=1}^{3}\frac{\partial{f_i}}{\partial{x_1}}\frac{\partial{f_i}}{\partial{x_2}}\cos\theta\sin\theta\\
&+\sum\limits_{i=1}^{3}\left(\frac{\partial{f_i}}{\partial{x_1}}\right)^2\cos^2\theta+\sum\limits_{i=1}^{3}\left(\frac{\partial{f_i}}{\partial{x_2}}\right)^2\sin^2\theta.
\end{aligned}
\end{eqnarray}
\begin{figure}[t]
\centering
 \includegraphics[height=6.225cm,width=8.3cm]{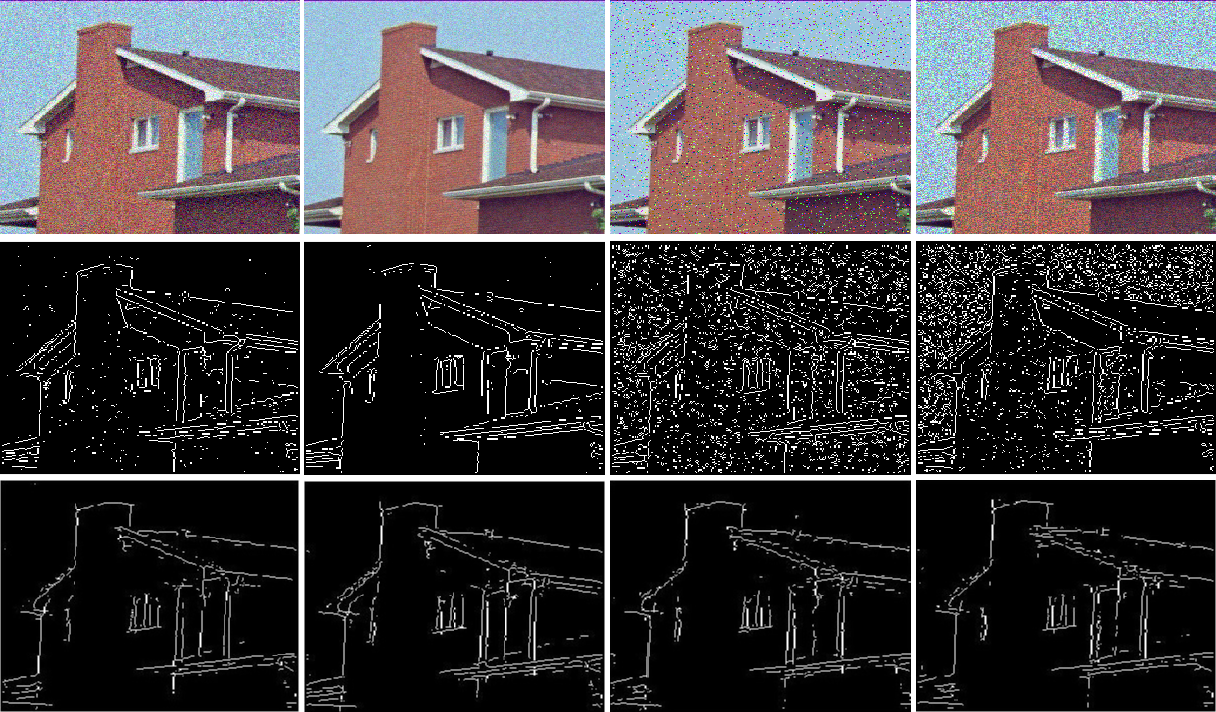}
  \caption{ The first row is the noisy House image with additive Gaussian noise, Poisson noise, Salt \& Pepper noise and Speckle noise from left to right. The second and third rows are the edge maps which are captured by IDZ  algorithm and the proposed algorithm, respectively. \textbf{The figure is viewed better in zoomed PDF.}}
  \label{house2}
\end{figure}
Let

\begin{figure}[th]
 \centering
 \includegraphics[height=8.3cm,width=8.3cm]{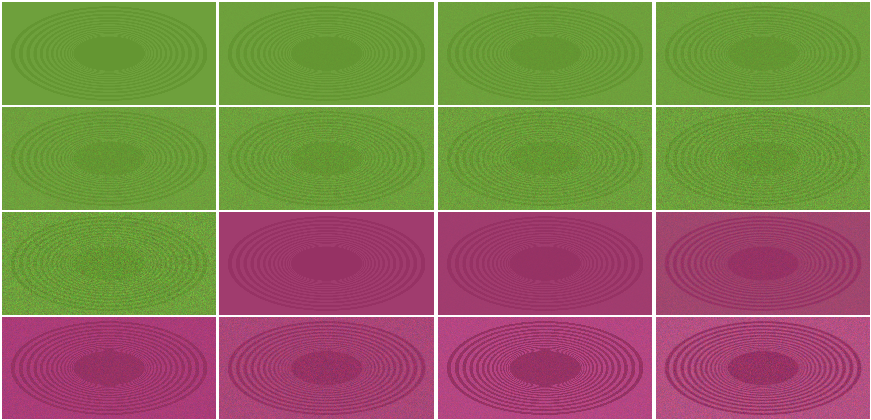}
  \caption{{{The test images. These images are randomly selected from the public image dataset (namely Synthetic, and from \href{https://urldefense.proofpoint.com/v2/url?u=http-3A__color.univ-2Dlille.fr_datasets_color-2Dedge&d=DwIFaQ&c=KXXihdR8fRNGFkKiMQzstu-8MbOxd1NuZkcSBymGmgo&r=WuCkcl8HOAexOSwshlwJ3w&m=GpWzZCbpvddgVNECF4m_mlofU3pS9zmK0WK-eRKLN2Q&s=_GRVRNnmdJv7DBtB1vHs6n-mw2OpOT-FiCx5e3fl0p4&e=}
  {link 2})}
  . \textbf{The figure is viewed better in zoomed PDF.}}}
  \label{TU2}
\end{figure}
\begin{eqnarray}\label{ABC}
 \left\{
 \begin{aligned}
&A= \sum\limits_{i=1}^{3}\left(\frac{\partial{f_i}}{\partial{x_1}}\right)^2; \\
&B= \sum\limits_{i=1}^{3}\left(\frac{\partial{f_i}}{\partial{x_2}}\right)^2; \\
&C= \sum\limits_{i=1}^{3}\frac{\partial{f_i}}{\partial{x_1}}\frac{\partial{f_i}}{\partial{x_2}}.
\end{aligned}\right.
\end{eqnarray}

Then the gradient magnitude $f_{\max}$ of the improved Di Zenzo's gradient operator is given by
\begin{eqnarray}\label{fMAX}
\begin{aligned}
&f_{\max}(\theta_{\max})\\
=&\max_{0\leq \theta \leq 2 \pi}{f(\theta)}\\
=&\frac{1}{2}\bigg(A+C+\sqrt{(A-C)^2+(2B)^2}\bigg).
\end{aligned}
\end{eqnarray}

The gradient direction is defined as the value $\theta_{\max}$ that maximizes $f(\theta)$ over $0\leq \theta \leq 2 \pi$
\begin{eqnarray}\label{thetaMAX}
\theta_{\max}=&\mbox{sgn}(B)\arcsin\bigg(\frac{f_{\max}-A}{2f_{\max}-A-C}\bigg),
\end{eqnarray}
where
$(A-C)^2+B^2\neq0$,
$\mbox{sgn}(B)=\left\{
          \begin{array}{ll}
            1, & {B\geq0;} \\
            -1, & {B<0.}
          \end{array}
        \right.$
When $(A-C)^2+B^2=0$, $\nonumber\theta_{\max}$ is undefined.

It is important to note that the IDZ edge detector is designed to process real domain signals and don't possess the capability of de-noising.

\subsection{Quaternions}
\label{Pre-Q}
As a natural extension of the complex space $\mathbb{C}$, the quaternion space $\mathbb{H}$ was first proposed by Hamilton \cite{Hamilton}. A complex number consists of two components: one real part and one imaginary part. While a quaternion $q\in\mathbb{H}$ has four components, i.e., one real part and three imaginary parts

\begin{figure}[th]
 \centering
 \includegraphics[height=4.15cm,width=8.3cm]{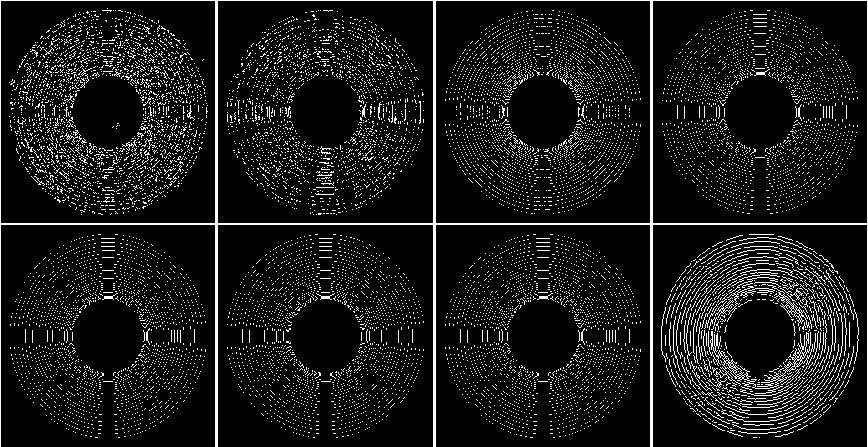}
  \caption{{Edge maps of a randomly selected noiseless image (the first row and second column of Fig. \ref{TU2}, namely GREEN). From left to right and top to bottom: QDPC, QDPA, Canny, Sobel, Prewitt, DPC, MDPC and the proposed algorithm. \textbf{The figure is viewed better in zoomed PDF.}}}
  \label{NL0}
\end{figure}


\begin{equation}\label{Eq.H}
q=q_0+q_1\mathbf{i}+q_2\mathbf{j}+q_3\mathbf{k},
\end{equation}
where $q_{n}\in\mathbb{R}, n=0,1,2,3$, and the basis elements $\{\mathbf{i},\mathbf{j},\mathbf{k}\}$ obey the Hamilton's multiplication rules
\begin{eqnarray}\label{Eq.Hamilton's multiplication rules}
\begin{aligned}
\mathbf{i}^2&=\mathbf{j}^2=\mathbf{k}^2=\mathbf{i}\mathbf{j}\mathbf{k}=-1;\\
\mathbf{i}\mathbf{j}&=\mathbf{k},\mathbf{j}\mathbf{k}=\mathbf{i},\mathbf{k}\mathbf{i}=\mathbf{j};\\
\mathbf{j}\mathbf{i}&=-\mathbf{k},\mathbf{k}\mathbf{j}=-\mathbf{i},\mathbf{i}\mathbf{k}=-\mathbf{j}.\\
\end{aligned}
\end{eqnarray}

Given a quaternion $q=q_0+q_1\mathbf{i}+q_2\mathbf{j}+q_3\mathbf{k}$, its quaternion conjugate is $\overline{q}:=q_0-q_1\mathbf{i}-q_2\mathbf{j}-q_3\mathbf{k}$. We write $\mathbf{Sc}(q):=\frac{1}{2}(q+\overline{q})=q_{0}$ and
$\mathbf{Vec}(q):=\frac{1}{2}(q-\overline{q})=q_1\mathbf{i}+q_2\mathbf{j}+q_3\mathbf{k}$,
which are the scalar and vector parts of $q$ , respectively. This leads to a modulus of $q\in\mathbb{H}$ defined by
\begin{equation}\label{Eq.modulus}
|q|=\sqrt{q\overline{q}}=\sqrt{\overline{q}q}=\sqrt{q_0^{2}+q_1^{2}+q_2^{2}+q_3^{2}},
\end{equation}
where $q_{n}\in\mathbb{R}, n=0,1,2,3$.

By \eqref{Eq.H}, an $\mathbb{H}$-valued function $f:\mathbb{R}^{2}\rightarrow\mathbb{H}$ can be expressed as
\begin{eqnarray}\label{Eq.H-valued function}
\begin{aligned}
&f(x_{1},x_{2})\\
=&f_0(x_{1},x_{2})+f_1(x_{1},x_{2})\mathbf{i}+f_2(x_{1},x_{2})\mathbf{j}+f_3(x_{1},x_{2})\mathbf{k},
\end{aligned}
\end{eqnarray}
where $f_{n}:\mathbb{R}^{2}\rightarrow\mathbb{R}(n=0,1,2,3)$.
In this paper, we consider using $f(x_{1},x_{2})$ to represent a color image, i.e $f(x_{1},x_{2})=f_1(x_{1},x_{2})\mathbf{i}+f_2(x_{1},x_{2})\mathbf{j}
+f_3(x_{1},x_{2})\mathbf{k}$. While $f_1$, $f_2$ and $f_3$ represent respectively the red, green and blue components of the color image.
\subsection{Quaternion Fourier transform}
\label{Pre-FT}

Suppose that $f$ is an absolutely integrable complex function defined on $\mathbb{R}$, then the Fourier transform \cite{stein} of $f$ is given by
\begin{eqnarray}\label{Eq.Fourier transform}
\widehat{f}(w)=\frac{1}{\sqrt{2\pi}}\int_{\mathbb{R}}f(x)e^{-iwx}d{x},
\end{eqnarray}
where $w$ denotes the angular frequency. Moreover, if  $\widehat{f}$ is an absolutely integrable complex function defined on $\mathbb{R}$ , then $f$ can be reconstructed by the Fourier transform of $f$ and is expressed by
\begin{eqnarray}\label{Eq.inverse Fourier transform}
f(x) =\frac{1}{\sqrt{2\pi}}\int_{\mathbb{R}}\widehat{f}(w)e^{iwx}dw.
\end{eqnarray}
\begin{figure}[ht]
 \centering
 \includegraphics[height=4.15cm,width=8.3cm]{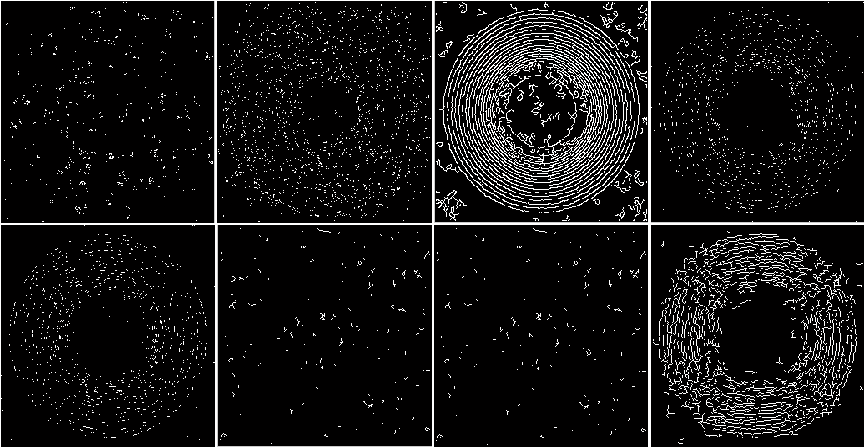}
  \caption{{The edge maps of the noisy Green image (with additive Gaussian noise) given by different algorithms. From left to right and top to bottom: QDPC, QDPA, Canny, Sobel, Prewitt, DPC, MDPC and the proposed algorithm. \textbf{The figure is viewed better in zoomed PDF.}}}
  \label{NL1}
\end{figure}
\begin{figure}[ht]
 \centering
 \includegraphics[height=4.15cm,width=8.3cm]{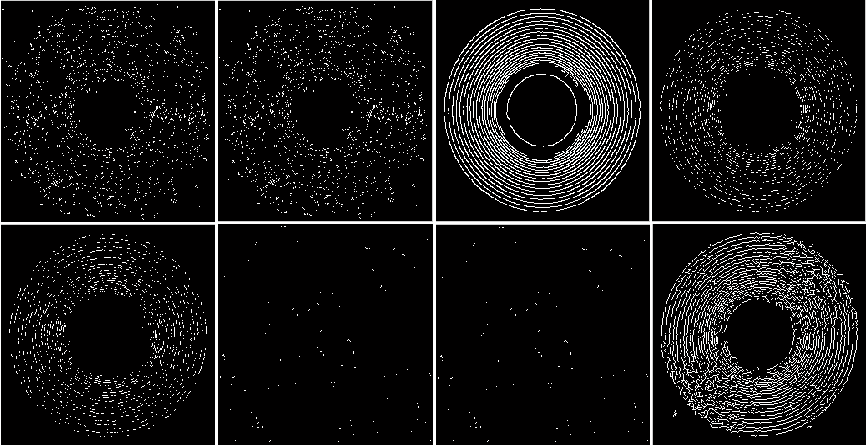}
  \caption{{The edge maps of the noisy Green image (with additive Poisson noise) given by different algorithms. From left to right and top to bottom: QDPC, QDPA, Canny, Sobel, Prewitt, DPC, MDPC and the proposed algorithm. \textbf{The figure is viewed better in zoomed PDF.}}}
  \label{NL2}
\end{figure}
\begin{figure}[ht]
 \centering
 \includegraphics[height=4.15cm,width=8.3cm]{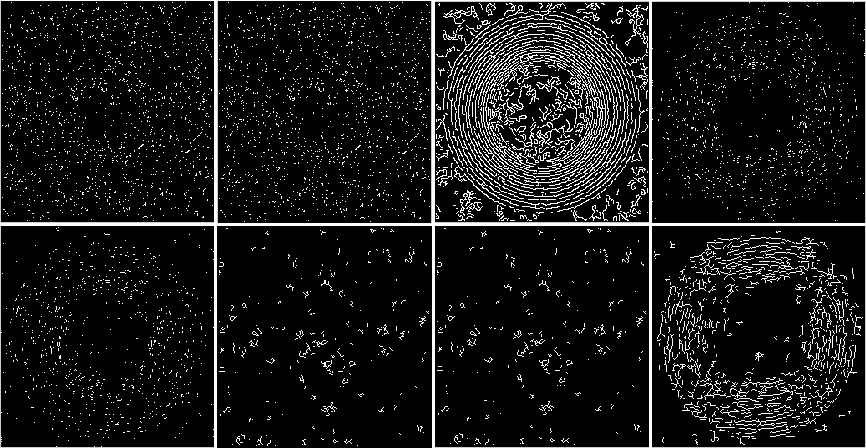}
  \caption{{The edge maps of the noisy Green image (with additive Salt \& Pepper noise) given by different algorithms. From left to right and top to bottom: QDPC, QDPA, Canny, Sobel, Prewitt, DPC, MDPC and the proposed algorithm. \textbf{The figure is viewed better in zoomed PDF.}}}
  \label{NL3}
\end{figure}

The quaternion Fourier transform, regarded as an extension of Fourier transform in quaternion domain, plays a vital role in grayscale image processing. The first definition of the quaternion Fourier transform was given in \cite{ELL} and the first application to color images was discussed in \cite{B32}. It was recently applied to find the envelope of the image \cite{B33}. The application of quaternion Fourier transform on color images was discussed in \cite{B24,B34}. The Plancherel and inversion theorems of quaternion Fourier transform in the square integrable signals class was established in \cite{B35}.
{Due to the non-commutativity of the quaternions, there are various types of quaternion Fourier transforms. In the following, we focus our attention on the two-sided quaternion Fourier transform (QFT).}

\begin{figure}[ht]
 \centering
 \includegraphics[height=4.15cm,width=8.3cm]{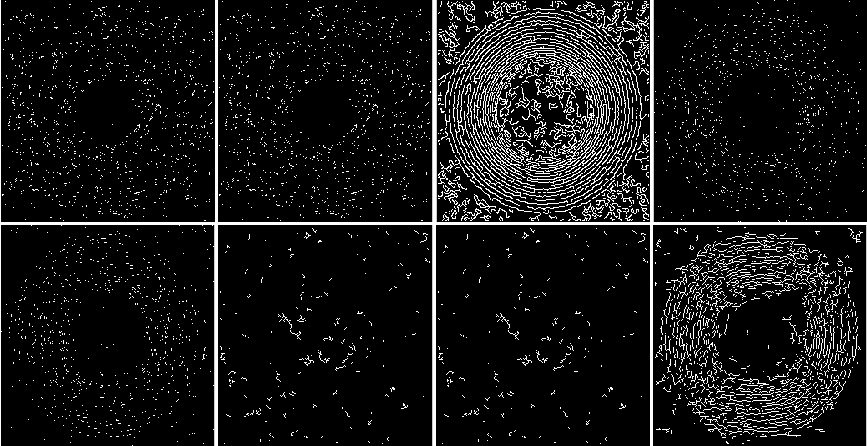}
  \caption{{The edge maps of the noisy Green image (with additive Speckle noise) given by different algorithms. From left to right and top to bottom: QDPC, QDPA, Canny, Sobel, Prewitt, DPC, MDPC and the proposed algorithm. \textbf{The figure is viewed better in zoomed PDF.}}}
  \label{NL4}
\end{figure}
\begin{figure}[ht]
 \centering
 \includegraphics[height=8.3cm,width=8.3cm]{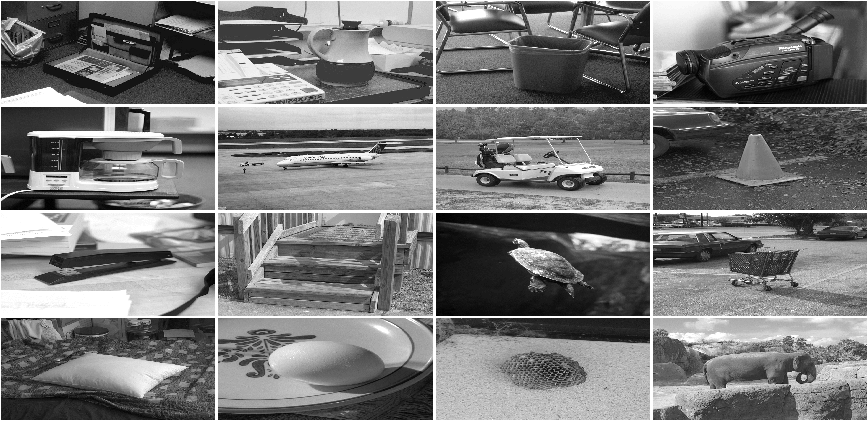}
  \caption{{The test images. These images are randomly selected from the public image dataset (namely CSEE, which is from \cite{bibre2} and \cite{bibre2a}). \textbf{The figure is viewed better in zoomed PDF.}}
  .}
  \label{TU3}
\end{figure}

\begin{figure}[ht]
 \centering
 \includegraphics[height=4.15cm,width=8.3cm]{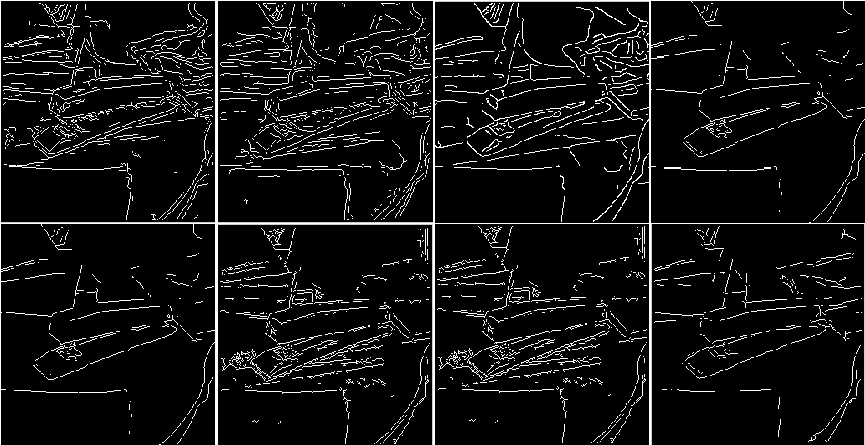}
  \caption{{Edge maps of a randomly selected noiseless image ( from CSEE dataset, namely Stapler) given by different algorithms. From left to right and top to bottom: QDPC, QDPA, Canny, Sobel, Prewitt, DPC, MDPC and the proposed algorithm. \textbf{The figure is viewed better in zoomed PDF.}}}
  \label{N30}
\end{figure}


\begin{figure}[ht]
 \centering
 \includegraphics[height=4.15cm,width=8.3cm]{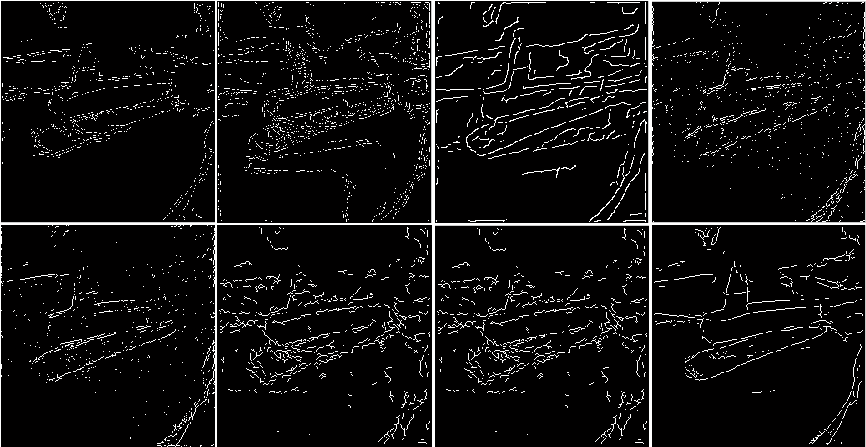}
  \caption{{The edge maps of the noisy Stapler image (with additive Gaussian noise) given by different algorithms. From left to right and top to bottom: QDPC, QDPA, Canny, Sobel, Prewitt, DPC, MDPC and the proposed algorithm. \textbf{The figure is viewed better in zoomed PDF.}}}
  \label{N31}
\end{figure}
Suppose that $f$ is an absolutely integrable $\mathbb{H}$-valued function defined on $\mathbb{R}^2$, then the continuous quaternion Fourier transform of $f$ is defined by
\begin{equation}\label{Eq.2-sided QFT}
\begin{aligned}
&(\mathcal{F}f)(w_1,w_2)\\
=&\frac{1}{2\pi}\int_{\mathbb{R}^{2}}e^{-\mathbf{i}w_1x_1}f(x_1,x_2)e^{-\mathbf{j}w_2x_2}d{x_1}d{x_2},
\end{aligned}
\end{equation}
where $w_l$ and $x_l$ denote the 2D angular frequency and 2D space ($l=1,2$), respectively.

Furthermore, if $f$ is an absolutely integrable $\mathbb{H}$-valued function defined on $\mathbb{R}^2$, then the continuous inverse quaternion Fourier transform (IQFT) of $f$ is defined by

\begin{equation}\label{Eq.inverse 2-sided QFT}
\begin{aligned}
&({\mathcal{F}}^{-1}f)(x_1,x_2)\\
=&\frac{1}{2\pi}\int_{\mathbb{R}^{2}}e^{\mathbf{i}w_1x_1}f(w_1,w_2)e^{\mathbf{j}w_2x_2}d{w_1}d{w_2},
\end{aligned}
\end{equation}
where $w_l$ and $x_l$ denote the 2D angular frequency and 2D space ($l=1,2$), respectively.

The discrete quaternion Fourier transform (DQFT) and its inverse is introduced by Sangwine \cite{DQFT}. Suppose that the discrete array $f(m,n)$ is of dimension $M\times N$. The DQFT has the following form
\begin{equation}\label{DQFT}
\begin{aligned}
&\mathcal{F}_D[f](p,s)\\
=&\frac{1}{\sqrt{MN}}
{\sum_{m=0}^{M-1}}{\sum_{n=0}^{N-1}}
e^{-\mathbf{i}2\pi \frac{mp}{M}}f(m,n)e^{-\mathbf{j}2\pi \frac{ns}{N}}.
\end{aligned}
\end{equation}
{where $p$ and $s$ are the variables in frequency domain.} And the inverse discrete quaternion Fourier transform (IDQFT) is
\begin{equation}\label{IDQFT}
\begin{aligned}
&f(m,n)\\
=&\frac{1}{\sqrt{MN}}
{\sum_{p=0}^{M-1}}{\sum_{s=0}^{N-1}}
e^{\mathbf{i}2\pi \frac{mp}{M}}\mathcal{F}_D[f](p,s)e^{\mathbf{j}2\pi \frac{ns}{N}}.
\end{aligned}
\end{equation}
\subsection{Quaternion Hardy space}
\label{Pre-QHS}

Let $ {\mathbb{C}}= \{z|z=x+si, x, s \in \mathbb{R}\}$ be the complex plane and a subset of ${\mathbb{C}}$ is defined by ${\mathbb{C}}^{+}= \{z|z=x+si,
x, s \in \mathbb{R},s>0\}$, namely upper half complex plane.
The Hardy space ${\mathbf{H}}^2(\mathbb{C}^{+})$ on the upper half complex plane consists of functions $c$ satisfying the following conditions
\begin{eqnarray}\label{Eq.Hardy space}
 \left\{
 \begin{aligned}
&\frac{\partial}{\partial \overline{z}}c(z)=0;\\
&(\sup\limits_{s>0}\int_{\mathbb{R}} |c(x+si)|^{2}dx)^{\frac{1}{2}} < \infty.
\end{aligned}\right.
\end{eqnarray}

The generalization \cite{B24} to higher dimension is given as follows. Let $ \mathbb{C}_{\mathbf{i} \mathbf{j}}= \{( z_1, z_2)|z_1=x_1+s_1\mathbf{i}, z_2=x_2+s_2\mathbf{j},
x_l, s_l \in \mathbb{R}, l=1,2\}$ and a subset of $\mathbb{C}_{\mathbf{i} \mathbf{j}}$ is defined by $ \mathbb{C}_{\mathbf{i} \mathbf{j}}^{+}= \{( z_1, z_2)|z_1=x_1+ s_1\mathbf{i}, z_2=x_2+ s_2\mathbf{j},x_l, s_l \in \mathbb{R}, s_l> 0, l=1,2\}.$
The quaternion Hardy space $\mathbf{Q}^2(\mathbb{C}_{\mathbf{i} \mathbf{j}}^+)$ consists of all functions satisfying the following conditions
\begin{eqnarray}\label{QHS conditions}
 \left\{
 \begin{aligned}
 &\frac{\partial}{\partial \overline{z_{1}}}h(z_1, z_2)=0;\\
 &h(z_1, z_2)\frac{\partial}{\partial \overline {z_{2}}}= 0;\\
 &(\sup\limits_{\substack{s_1>0 \\
 s_2>0}}\int_{\mathbb{R}^2} |h(x_1+ s_1\mathbf{i}, x_2+s_2\mathbf{j} )|^{2}dx_{1}dx_{2})^{\frac{1}{2}}  < \infty,
  \end{aligned}\right.
\end{eqnarray}
where $\frac{\partial}{\partial \overline{z_{1}}}=\frac{\partial}{\partial {x_{1}}}+\mathbf{i}\frac{\partial}{\partial {s_{1}}}$, $\frac{\partial}{\partial \overline{z_{2}}}=\frac{\partial}{\partial {x_{2}}}+\mathbf{j}\frac{\partial}{\partial {s_{2}}}$.

\subsection{Quaternion analytic signal}
\label{Pre-QAS}
In the following, we review the concept of analytic signal. Given a real signal $f$, combined with its own Hilbert transform, then the analytic signal of $f$ is given by
\begin{equation}\label{Eq.1D analytic signal}
f_a(x) =f(x)+i\mathcal{H}[f](x),   x\in\mathbb{R},
\end{equation}

where $\mathcal{H}[f]$ denotes the Hilbert transform of $f$ and is defined by
\begin{equation}\label{Eq.Hilbert transform}
\mathcal{H}[f](x) =\frac{1}{\pi} \lim\limits_{\varepsilon\to 0^+} \int_{\varepsilon \leq |x-s|} \frac{f(s)}{x-s}ds.
\end{equation}
\begin{figure}[ht]
 \centering
 \includegraphics[height=4.15cm,width=8.3cm]{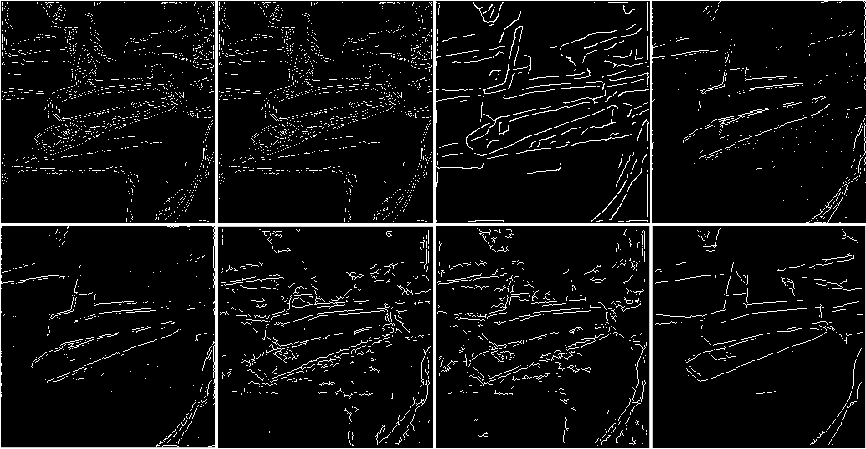}
  \caption{{The edge maps of the noisy Stapler image (with additive Poisson noise) given by different algorithms. From top to bottom: QDPC, QDPA, Canny, Sobel, Prewitt, DPC, MDPC and the proposed algorithm. \textbf{The figure is viewed better in zoomed PDF.}}}
  \label{N32}
\end{figure}

\begin{figure}[ht]
 \centering
 \includegraphics[height=4.15cm,width=8.3cm]{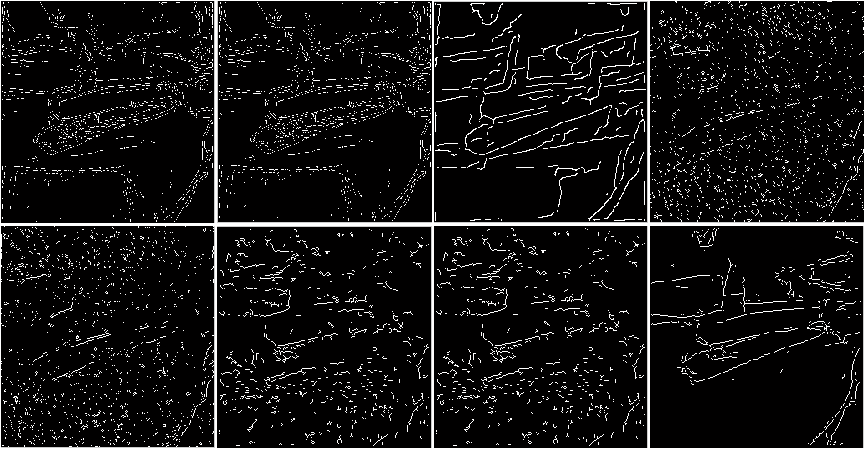}
  \caption{{The edge maps of the noisy Stapler image (with additive  Salt \& Pepper noise) given by different algorithms. From top to bottom: QDPC, QDPA, Canny, Sobel, Prewitt, DPC, MDPC and the proposed algorithm. \textbf{The figure is viewed better in zoomed PDF.}}}
  \label{N33}
\end{figure}

\begin{figure}[ht]
 \centering
 \includegraphics[height=4.15cm,width=8.3cm]{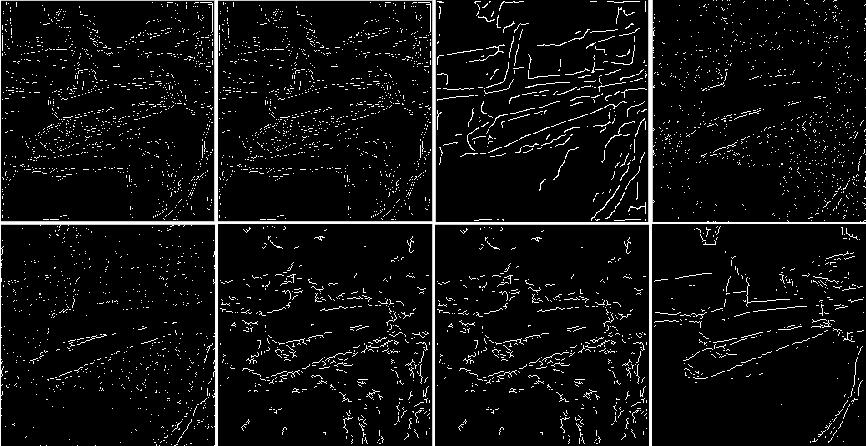}
  \caption{{The edge maps of the noisy Stapler image (with additive Speckle noise) given by different algorithms. From top to bottom: QDPC, QDPA, Canny, Sobel, Prewitt, DPC, MDPC and the proposed algorithm. \textbf{The figure is viewed better in zoomed PDF.}}}
  \label{N34}
\end{figure}

\begin{figure}[ht]
 \centering
 \includegraphics[height=6.225cm,width=8.3cm]{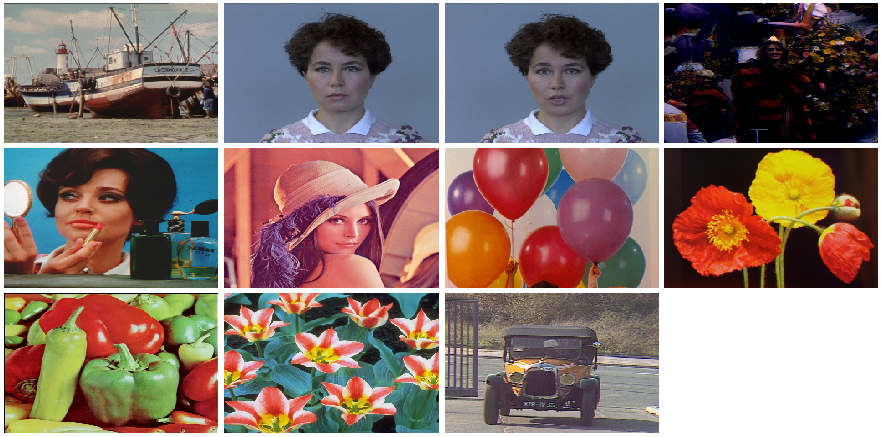}
  \caption{{The test images. These images are randomly selected from the public image dataset (namely Scene, and from \href{http://decsai.ugr.es/cvg/dbimagenes/}
  {link 4}). \textbf{The figure is viewed better in zoomed PDF.}}}
  \label{TU4}
\end{figure}

The Fourier transform of an analytic $f_a$ defined in (\ref{Eq.1D analytic signal}) is given by
\begin{equation}\label{fa}
\widehat{f_a}(w)=\left(1+\mbox{sgn}(w)\right)\widehat{f}(w),
 \end{equation}
 where $w\in\mathbb{R}$.

A natural extension of the analytic signal from 1D to 4D space in the quaternion setting is called quaternion analytic signal. It was proposed by B\"ulow and Sommer \cite{B28} using partial and total Hilbert transform associated with QFT. Given a 2D quaternion valued signal $f$, combined with its own quaternion partial and total Hilbert transform, then we get a quaternion analytic signal $f_q$ \cite{B28} as follows

\begin{equation}\label{QanalyticS}
\begin{aligned}
&f_q(x_1,x_2)\\
=&f(x_1,x_2)+\mathbf{i}\mathcal{H}_{x_1}[f](x_1)+\mathcal{H}_{x_2}[f](x_2)\mathbf{j}\\
&+\mathbf{i}\mathcal{H}_{{x_1}{x_2}}[f](x_1,x_2)\mathbf{j},
\end{aligned}
\end{equation}
where
\begin{eqnarray}
\begin{aligned}
&\mathcal{H}_{x_1}[f](x_1)=
\frac{1}{\pi}
\mathop{\lim}\limits_{\alpha\to0}{\int}_{|x_1-t_1|>\alpha}\frac{f(t_1,x_2)}{x_1-t_1}dt_1, \label{deHx1}\\
&\mathcal{H}_{x_2}[f](x_2)=
\frac{1}{\pi} \mathop{\lim}\limits_{\alpha\to0}{\int}_{|x_2-t_1|>\alpha}\frac{f(x_1,t_1)}{x_2-t_1}dt_1, \label{deHx2}\\
\end{aligned}
\end{eqnarray}

are the quaternion partial Hilbert transform of $f$ along the $x_1$-axis, $x_2$-axis, respectively. While
\begin{equation}\label{deHx1,x2}
\begin{aligned}
&\mathcal{H}_{{x_1}{x_2}}[f](x_1,x_2)\\
=&
\frac{1}{\pi}
\lim_{
\substack{\alpha_1\to0 \\
\alpha_2\to0}}
\int_{
\substack{|x_1-t_1|>\alpha_1\\
|x_2-t_2|>\alpha_2}}
\frac{f(t_1,t_2)dt_1dt_2}{(x_1-t_1)(x_2-t_2)},
\end{aligned}
\end{equation}
is the quaternion total Hilbert transform along the $x_1$ and $x_2$ axes.
By  direct computation, the quaternion Fourier transform of quaternion analytic signal   is given by
\begin{equation}\label{fq}
\begin{aligned}
(\mathcal{F}{f_q})(w_1,w_2)=&[1+\mbox{sgn}(w_1)][1+\mbox{sgn}(w_2)]\\
&(\mathcal{F}{f})(w_1,w_2).
\end{aligned}
\end{equation}
\section{Proposed algorithm}
\label{sec:Pro}
In this section, we introduce our new color edge detection algorithm. To begin with, the definition of quaternion Hardy filter is presented.

\subsection{Quaternion Hardy filter}
\label{pro-QHF}
The quaternion analytic signal $f_q$ can be regarded as the output signal of a filter with input $f$. The system function of this filter is
\begin{equation}\label{filter1}
H_1(w_1,w_2)=[1+\mbox{sgn}(w_1)][1+\mbox{sgn}(w_2)].
\end{equation}
In this paper, we use a novel filter, named {\it quaternion Hardy filter (QHF)}, to construct a high-dimensional analytic signal.  The system function of QHF is defined by
\begin{equation}\label{filter2}
\begin{aligned}
&H(w_1,w_2,s_1,s_2)\\
=&[1+\mbox{sgn}(w_1)][1+\mbox{sgn}(w_2)]
e^{-\mid{w_1}\mid{s_1}}e^{-\mid{w_2}\mid{s_2}},
\end{aligned}
\end{equation}
where $s_1\geq0, s_2\geq0$ are parameters of the system function. The factors $(1+\mbox{sgn}(w_1))(1+\mbox{sgn}(w_2))$ and $e^{-\mid{w_1}\mid{s_1}}e^{-\mid{w_2}\mid{s_2}}$ play different roles in quaternion Hardy filter. The former performs Hilbert transform on the input signal, while the later plays a role of suppressing the high-frequency. On the one hand, the Hilbert transform operation can  selectively emphasize the edge feature of
an input object. On the other hand, the low-pass filtering can improve the ability of noise immunity for the QHF. It can be seen that as increase with $s_1, s_2$, the effect of inhibiting for the high frequency becomes more obvious. In particular, if $s_1=s_2=0$, then $e^{-\mid{w_1}\mid{s_1}}e^{-\mid{w_2}\mid{s_2}}=1$, it follows that
\begin{equation}\label{filter2=0}
H(w_1,w_2,0,0)=H_1(w_1,w_2),
\end{equation}
which means that there is no effect in high frequency inhibiting.

Parameters $s_1$ and $s_2$ play the role of low-pass filtering in vertical and horizontal directions, respectively. When the signal frequencies in the two directions are similar, then $s_1$ and $s_2$ can be set to the same value. If the signal frequencies in these two directions are different, then $s_1$ and $s_2$ should be different. For example, if the horizontal noise in the image is large, the value of $s_2$ should be set larger to enhance the anti-noise ability in that direction.
This means that the QHF is very general and flexible, and it can solve many problems that can't be solved well by quaternion analytic signal.
\begin{algorithm}
	\caption{{Quaternion Hardy filter edge detection algorithm}}
	\begin{algorithmic}[1] %
		\Require Color image $f$, fixed parameters $s_{1} > 0$ and $s_{2} > 0$.
		\Ensure Edge map.
		
		\State  {Associate $f$ with a $\mathbb{H}$-valued signal.}
		\State  {Compute the DQFT of the $f$ using the following equation (\ref{DQFT}). The result will be $\mathcal{F}_D[f]$.}
		\State  Multiplying $\mathcal{F}_D[f]$ by the system function (\ref{filter2}) of the QHF. Then we obtain $(\mathcal{F}_D[f_H])$.
		\State   Compute the inverse DQFT for  $\mathcal{F}_D[f_H]$ by applying equation (\ref{IDQFT}), we obtain $f_H$.
		\State  Extract the vector part of $f_H$, we obtain $\mathbf{Vec}(f_H)$.
		\State   Perform the IDZ gradient operator to $\mathbf{Vec}(f_H)$.
		\State Output edge map.
	\end{algorithmic}
\end{algorithm}

\begin{figure}[ht]
 \centering
 \includegraphics[height=9.5cm,width=4.5cm]{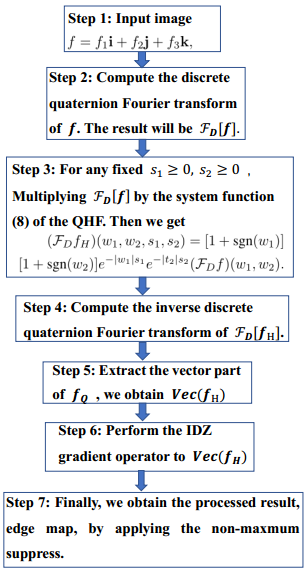}
  \caption{Block diagram of the proposed algorithm.}
  \label{flow}
\end{figure}
\begin{table*}[th]
\centering
\caption{The PSNR comparison values for Fig. \ref{TU1}.
 Types of noise: I- Gaussian noise, II- Poisson noise, III- Salt \& Pepper noise and IV- Speckle noise.}
\label{tabP-1}
\begin{tabular}{cccccccccc}
\hline
&  &QDPC \cite{B26} &QDPA \cite{B26}   &Canny \cite{Canny1986} &Sobel \cite{Sobel} &Prewitt \cite{Prewitt}  &DPC \cite{DPC}  &MDPC \cite{MDPC}   &Ours\\
\hline
&\uppercase\expandafter{\romannumeral1} &{65.0923} &{60.8565}	&{56.4596}	&\textbf{65.1202}	&{64.7742}  &{61.0034} &{60.3424}	   &{65.0307}\\
&\uppercase\expandafter{\romannumeral2} &{65.2465} &{61.6973}	&{60.5743}  &\textbf{68.2007}	&{67.229}  &{62.8342} &{62.6399}	   &{67.0787}\\
&\uppercase\expandafter{\romannumeral3} &\textbf{64.1046} &{59.6485}	&{55.7632}	&{61.1161}	&{60.2489}	 &{62.4145} &{59.6716}	   &{63.8864}\\
&\uppercase\expandafter{\romannumeral4} &\textbf{64.9737} &{60.7835}	&{57.1092}	&{63.2211}	&{64.4834}	 &{61.0204} &{60.364}	   &{64.9525}\\
\hline
\end{tabular}
\end{table*}
\begin{table*}[th]
\centering
\caption{The SSIM comparison values for Fig. \ref{TU1}.
 Types of noise: I- Gaussian noise, II- Poisson noise, III- Salt \& Pepper noise and IV- Speckle noise.}
\label{tabS-1}
\begin{tabular}{cccccccccc}
\hline
&  &QDPC \cite{B26} &QDPA \cite{B26} &Canny \cite{Canny1986} &Sobel \cite{Sobel} &Prewitt \cite{Prewitt}  &DPC \cite{DPC}  &MDPC \cite{MDPC}   &Ours\\
\hline
&\uppercase\expandafter{\romannumeral1} &{0.7058} &{0.5864}	&{0.3912}	&{0.7671}	&{0.7695}   &{0.4018} &{0.4264}	   &\textbf{0.8272}\\
&\uppercase\expandafter{\romannumeral2} &{0.7185} &{0.6347}	&{0.6417}   &{0.8608}	&{0.8649}   &{0.6327} &{0.6143}	   &\textbf{0.9066}\\
&\uppercase\expandafter{\romannumeral3} &{0.6196} &{0.5122}	&{0.2738}	&{0.4207}	&{0.5736}	&{0.2907} &{0.2704}	   &\textbf{0.7275}\\
&\uppercase\expandafter{\romannumeral4} &{0.6866} &{0.5737}	&{0.4438}	&{0.6171}	&{0.7571}	&{0.4538} &{0.4577}	   &\textbf{0.7939}\\
\hline
\end{tabular}
\end{table*}

For any fixed $s_1, s_2 \geq 0$, denote by $f_H(x_1,x_2,s_1,s_2)$ the output signal of the QHF with input $f(x_1,x_2)$. By the definition, we have
\begin{equation}\label{QFT f_H}
\begin{aligned}
&(\mathcal{F}{f_H})(w_1,w_2,s_1,s_2)\\
= &[1+\mbox{sgn}(w_1)][1+\mbox{sgn}(w_2)]
e^{-\mid{w_1}\mid{s_1}}e^{-\mid{t_2}\mid{s_2}}\\
&(\mathcal{F}{f})(w_1,w_2).
\end{aligned}
\end{equation}
Here, the QFT acts on the variable $x_1,x_2$. We will show that as a function of $z_1=x_1+\mathbf{i}s_1$ and $ z_2=x_2+\mathbf{j}s_2$, $f_H$ belongs to the quaternion Hardy space $\mathbf{Q}^2(\mathbb{C}_{\mathbf{i} \mathbf{j}}^+)$.\\

\begin{theorem}
{Let $f\in L^2(\mathbb{R}^2, \mathbb{H})$ and  $f_H$ be given above. Then   $f_H\in\mathbf{Q}^2(\mathbb{C}_{\mathbf{i} \mathbf{j}}^+)$.}
\end{theorem}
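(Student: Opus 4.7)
The plan is to verify the three defining conditions of $\mathbf{Q}^2(\mathbb{C}_{\mathbf{i}\mathbf{j}}^+)$ listed in \eqref{QHS conditions} by representing $f_H$ through the inverse QFT and then exploiting the one-sided support of the multiplier $[1+\mathrm{sgn}(w_1)][1+\mathrm{sgn}(w_2)]$.

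First, I would apply the inversion formula \eqref{Eq.inverse 2-sided QFT} to the formula given for $(\mathcal{F}f_H)$. Because the factor $[1+\mathrm{sgn}(w_1)][1+\mathrm{sgn}(w_2)]$ vanishes outside $w_1,w_2>0$, the integration can be restricted to the first quadrant, where it equals $4$. On that quadrant the key observation is that, since $\mathbf{i}$ commutes with real numbers, $e^{\mathbf{i}w_1 x_1}e^{-w_1 s_1}=e^{\mathbf{i}w_1(x_1+\mathbf{i}s_1)}=e^{\mathbf{i}w_1 z_1}$, and similarly on the right $e^{-w_2 s_2}e^{\mathbf{j}w_2 x_2}=e^{\mathbf{j}w_2 z_2}$. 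Preserving the two-sided ordering carefully (the $\mathbf{i}$-exponential stays on the left of $(\mathcal{F}f)$, the $\mathbf{j}$-exponential on the right), this yields the clean representation
\begin{equation*}
f_H(z_1,z_2)=\frac{2}{\pi}\int_0^\infty\!\!\int_0^\infty e^{\mathbf{i}w_1 z_1}(\mathcal{F}f)(w_1,w_2)\,e^{\mathbf{j}w_2 z_2}\,dw_1 dw_2.
\end{equation*}

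Second, I would differentiate under the integral sign to check the two Cauchy--Riemann type equations. Direct computation gives $\frac{\partial}{\partial x_1}e^{\mathbf{i}w_1 z_1}=\mathbf{i}w_1 e^{\mathbf{i}w_1 z_1}$ and $\frac{\partial}{\partial s_1}e^{\mathbf{i}w_1 z_1}=-w_1 e^{\mathbf{i}w_1 z_1}$, so $\frac{\partial}{\partial \overline{z_1}}e^{\mathbf{i}w_1 z_1}=(\mathbf{i}w_1-\mathbf{i}w_1)e^{\mathbf{i}w_1 z_1}=0$ on the support $w_1>0$; an analogous computation applied from the right gives $e^{\mathbf{j}w_2 z_2}\frac{\partial}{\partial \overline{z_2}}=0$. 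Passing the derivatives inside the integral then yields conditions (i) and (ii). The differentiation under the integral is justified by the exponential decay $e^{-w_1 s_1}e^{-w_2 s_2}$ for any fixed $s_1,s_2>0$, which dominates $w_j|(\mathcal{F}f)(w_1,w_2)|$ after a Cauchy--Schwarz estimate against $(\mathcal{F}f)\in L^2$.

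Third, for the uniform $L^2$ bound I would apply Plancherel's theorem for the two-sided QFT (stated for this setting in the cited reference of Section~\ref{sec:Pre-Fourier transform}). This gives
\begin{equation*}
\int_{\mathbb{R}^2}|f_H(x_1+s_1\mathbf{i},x_2+s_2\mathbf{j})|^2\,dx_1 dx_2=16\int_0^\infty\!\!\int_0^\infty e^{-2w_1 s_1}e^{-2w_2 s_2}|(\mathcal{F}f)(w_1,w_2)|^2 dw_1 dw_2,
\end{equation*}
and since $e^{-2w_1 s_1}e^{-2w_2 s_2}\le 1$ for $s_1,s_2>0$ and $w_1,w_2>0$, this is bounded by $16\|f\|_2^2$ independently of $s_1,s_2$, yielding condition (iii).

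The main obstacle I anticipate is the noncommutativity of $\mathbb{H}$: the identifications $e^{\mathbf{i}w_1 x_1}e^{-w_1 s_1}=e^{\mathbf{i}w_1 z_1}$ and the right-sided $e^{-w_2 s_2}e^{\mathbf{j}w_2 x_2}=e^{\mathbf{j}w_2 z_2}$ only work because $\mathbf{i}$ and $\mathbf{j}$ each commute with real scalars, and the two-sided structure of the QFT must be respected so that the $\mathbf{i}$-factor never crosses a $\mathbf{j}$-factor. As long as this ordering is preserved throughout and the operator $\partial/\partial\overline{z_2}$ is understood as acting from the right (matching the definition given just after \eqref{QHS conditions}), the argument goes through smoothly.
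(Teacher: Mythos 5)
Your proposal is correct and follows essentially the same route as the paper's proof: represent $f_H$ via the inverse QFT, differentiate under the integral to verify the two Cauchy--Riemann type conditions (the paper keeps the $\mathrm{sgn}$ factors and observes the integrand $\mathbf{i}(w_1-|w_1|)[1+\mathrm{sgn}(w_1)]\cdots$ vanishes identically, which is the same cancellation you obtain by restricting to the first quadrant), and then apply the QFT Parseval identity to get the uniform bound $16\|f\|_2^2$. Your version is marginally tidier --- the first-quadrant representation with $e^{\mathbf{i}w_1z_1}$, $e^{\mathbf{j}w_2z_2}$ and the exact Plancherel identity with the $e^{-2w_ls_l}$ weights are cleaner than the paper's pointwise inequality $(\mathcal{F}f_H)\le 4(\mathcal{F}f)$, and you explicitly justify differentiation under the integral --- but the underlying argument is the same.
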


\begin{proof}
Using inverse quaternion Fourier transform defined by Eq. (\ref{Eq.inverse 2-sided QFT}), we have that
\begin{equation}\label{proof}
\begin{aligned}
&f_H(x_1,x_2,s_1,s_2)\\
=&\frac{1}{2\pi}\int_{\mathbb{R}^{2}}e^{\mathbf{i}w_1x_1}[1+\mbox{sgn}(w_1)]
[1+\mbox{sgn}(w_2)]e^{-\mid{w_1}\mid{s_1}}\\
&e^{-\mid{w_2}\mid{s_2}}
(\mathcal{F}{f})(w_1,w_2)e^{\mathbf{j}w_2x_2}d{w_1}d{w_2}.
\end{aligned}
\end{equation}
Taking the derivative of $f_H$ with respect to $\overline{z_{1}}$, we get
\begin{equation}\label{proof1}
\begin{aligned}
&\frac{\partial}{\partial \overline{z_{1}}}f_H(x_1,x_2,s_1,s_2)\\
=&[\frac{\partial}{\partial {x_{1}}}+\mathbf{i}\frac{\partial}{\partial {s_{1}}}]f_H(x_1,x_2,s_1,s_2)\\
=&\frac{1}{2\pi}\int_{\mathbb{R}^{2}}\mathbf{i}(w_1-|w_1|)
e^{\mathbf{i}w_1x_1}e^{-\mid{w_1}\mid{s_1}}e^{-\mid{w_2}\mid{s_2}}\\
&[1+\mbox{sgn}(w_1)][1+\mbox{sgn}(w_2)](\mathcal{F}{f})(w_1,w_2)\\
&e^{\mathbf{j}w_2x_2}d{w_1}d{w_2}\\
=&0.
\end{aligned}
\end{equation}
The last equality holds since the integrand vanishes identically. Similarly,
\begin{equation}\label{proof2}
\begin{aligned}
&f_H(x_1,x_2,s_1,s_2)\frac{\partial}{\partial \overline{z_{2}}}\\
=&\frac{\partial}{\partial {x_{2}}}f_H(x_1,x_2,s_1,s_2)+\frac{\partial}{\partial {s_{2}}}f_H(x_1,x_2,s_1,s_2)\mathbf{j}\\
=&\frac{1}{2\pi}\int_{\mathbb{R}^{2}}e^{\mathbf{i}w_1x_1}[1+\mbox{sgn}(w_1)][1+\mbox{sgn}(w_2)]\\
&e^{-\mid{w_1}\mid{s_1}}e^{-\mid{w_2}\mid{s_2}}(\mathcal{F}{f})(w_1,w_2)\mathbf{j}(w_2-|w_2|)\\
&e^{\mathbf{j}w_2x_2}d{w_1}d{w_2}\\
=&0.
\end{aligned}
\end{equation}
For any fixed $s_1>0, s_2>0$, from (\ref{QFT f_H}) we can obtain that
\begin{equation}\label{proof31}
(\mathcal{F}{f_H})(w_1,w_2,s_1,s_2)\leq 4(\mathcal{F}{f})(w_1,w_2).
\end{equation}
According to the QFT Parseval's identity \cite{B35}, we obtain that
\begin{equation}\label{proof32}
\begin{aligned}
&\int_{\mathbb{R}^{2}}|(\mathcal{F}{f_H})(w_1,w_2,s_1,s_2)|^2dw_1dw_2\\
=&\int_{\mathbb{R}^{2}} |f_H(x_1,x_2,s_1,s_2)|^2dx_1dx_2,
\end{aligned}
\end{equation}
\begin{equation}\label{proof33}
\begin{aligned}
&\int_{\mathbb{R}^{2}}|(\mathcal{F}{f})(w_1,w_2)|^2dw_1dw_2\\
=&\int_{\mathbb{R}^{2}} |f(x_1,x_2)|^2dx_1dx_2.
\end{aligned}
\end{equation}
\begin{table*}[th]
\centering
\caption{{The PSNR comparison values for Fig. \ref{TU2}.
 Types of noise: 0-no noise, I- Gaussian noise, II- Poisson noise, III- Salt \& Pepper noise and IV- Speckle noise.}}
\label{tabP-2}
\begin{tabular}{cccccccccc}
\hline
&  &QDPC \cite{B26} &QDPA \cite{B26}   &Canny \cite{Canny1986} &Sobel \cite{Sobel} &Prewitt \cite{Prewitt}  &DPC \cite{DPC}  &MDPC \cite{MDPC}   &Ours\\
\hline
&\uppercase\expandafter{\romannumeral1} &{23.0668} &{23.0547}	&{23.0997}	&{23.0433}	&{23.0432} &{23.0396} &{23.0396}	   &\textbf{23.1006}\\
&\uppercase\expandafter{\romannumeral2} &{23.0776} &{23.0612}	&{23.0791} &{23.0447}	&{23.0447} &{23.0399} &{23.0398}	   &\textbf{23.1131}\\
&\uppercase\expandafter{\romannumeral3} &{23.0719} &{23.0585}	&{23.1006}	&{23.0462}	&{23.0459}	&{23.0400} &{23.0399}	   &\textbf{23.1009}\\
&\uppercase\expandafter{\romannumeral4} &{23.0803} &{23.0574}	&{23.0997}	&{23.0426}	&{23.0427}	&{23.0396} &{23.0396}	   &\textbf{23.1018}\\
\hline
\end{tabular}
\end{table*}
\begin{table*}[th]
\centering
\caption{{The SSIM comparison values for Fig. \ref{TU2}.
 Types of noise: 0-no noise, I- Gaussian noise, II- Poisson noise, III- Salt \& Pepper noise and IV- Speckle noise.}}
\label{tabS-2}
\begin{tabular}{cccccccccc}
\hline
&  &QDPC \cite{B26} &QDPA \cite{B26}   &Canny \cite{Canny1986} &Sobel \cite{Sobel} &Prewitt \cite{Prewitt}  &DPC \cite{DPC}  &MDPC \cite{MDPC}   &Ours\\
\hline
&\uppercase\expandafter{\romannumeral1} &{0.2729} &{0.2179}	&{0.0011}	&{0.1575}	&{0.1577} &{0.2745} &{0.2786}	   &\textbf{0.3061}\\
&\uppercase\expandafter{\romannumeral2} &{0.2992} &{0.2296}	&{0.0205}   &{0.2133}	&{0.2155} &{0.2929} &{0.2931}	   &\textbf{0.3177}\\
&\uppercase\expandafter{\romannumeral3} &{0.1659} &{0.1409}	&{0.0013}	&{0.1025}	&{0.1113}	&{0.2600} &{0.2618}	   &\textbf{0.2941}\\
&\uppercase\expandafter{\romannumeral4} &{0.1788} &{0.1750}	&{0.0008}	&{0.1482}  	&{0.1485}	&{0.2746} &{0.2739}	   &\textbf{0.3084}\\
\hline
\end{tabular}
\end{table*}
\begin{table*}[th]
\centering
\caption{{The PSNR comparison values for Fig. \ref{TU3}.
 Types of noise: 0-no noise, I- Gaussian noise, II- Poisson noise, III- Salt \& Pepper noise and IV- Speckle noise.}}
\label{tabP-3}
\begin{tabular}{cccccccccc}
\hline
&  &QDPC \cite{B26} &QDPA \cite{B26}   &Canny \cite{Canny1986} &Sobel \cite{Sobel} &Prewitt \cite{Prewitt}  &DPC \cite{DPC}  &MDPC \cite{MDPC}   &Ours\\
\hline
&\uppercase\expandafter{\romannumeral1} &{56.9960} &{57.8936}	&{54.0672}	&{60.4140}	&{60.5610}  &{54.1521} &{54.1476}	   &\textbf{62.4188}\\
&\uppercase\expandafter{\romannumeral2} &{58.6229} &{60.0162}	&{54.9847}  &{60.2876}	&{60.3119}  &{56.4325} &{56.4360}	   &\textbf{64.0926}\\
&\uppercase\expandafter{\romannumeral3} &{56.2743} &{57.2830}	&{53.7631}	&{59.3229}	&{59.6852}	&{54.6971} &{54.6898}	   &\textbf{61.6564}\\
&\uppercase\expandafter{\romannumeral4} &{56.4520} &{57.5844}	&{53.9876}	&{60.0839}	&{60.2857}	&{54.3692} &{54.4606}	   &\textbf{61.6499}\\
\hline
\end{tabular}
\end{table*}

\begin{table*}[th]
\centering
\caption{{The SSIM comparison values for Fig. \ref{TU3}.
 Types of noise: 0-no noise, I- Gaussian noise, II- Poisson noise, III- Salt \& Pepper noise and IV- Speckle noise.}}
\label{tabS-3}
\begin{tabular}{cccccccccc}
\hline
&  &QDPC \cite{B26} &QDPA \cite{B26}   &Canny \cite{Canny1986} &Sobel \cite{Sobel} &Prewitt \cite{Prewitt}  &DPC \cite{DPC}  &MDPC \cite{MDPC}   &Ours\\
\hline
&\uppercase\expandafter{\romannumeral1} &{0.3840} &{0.4297}	&{0.0469}	&{0.4648}	&{0.5199}   &{0.0669} &{0.0660}	   &\textbf{0.7318}\\
&\uppercase\expandafter{\romannumeral2} &{0.5361} &{0.6141}	&{0.1727}   &{0.5758}	&{0.5909}   &{0.3027} &{0.3022}	   &\textbf{0.7995}\\
&\uppercase\expandafter{\romannumeral3} &{0.2921} &{0.3647}	&{0.0190}	&{0.2277}	&{0.2990}	&{0.0675} &{0.0694}	   &\textbf{0.6612}\\
&\uppercase\expandafter{\romannumeral4} &{03295}  &{0.4007}	&{0.0646}	&{0.4200}   &{0.4643}	&{0.1168} &{0.1206}	   &\textbf{0.6577}\\
\hline
\end{tabular}
\end{table*}
\begin{table*}[th]
\centering
\caption{The PSNR comparison values for Fig. \ref{TU4}.
 Types of noise: I- Gaussian noise, II- Poisson noise, III- Salt \& Pepper noise and IV- Speckle noise.}
\label{tabP-4}
\begin{tabular}{cccccccccc}
\hline
&  &QDPC \cite{B26} &QDPA \cite{B26}   &Canny \cite{Canny1986} &Sobel \cite{Sobel} &Prewitt \cite{Prewitt}  &DPC \cite{DPC}  &MDPC \cite{MDPC}   &Ours\\
\hline
&\uppercase\expandafter{\romannumeral1} &{59.5875} &{57.1603}	&{54.8202}	&{61.4068}	&{61.8794}  &{58.5991} &{59.1803}	   &\textbf{62.5796}\\
&\uppercase\expandafter{\romannumeral2} &{59.7487} &{58.0289}	&{56.0615}  &{61.7013}	&{62.0238}  &{59.1115} &{59.5553}	   &\textbf{64.3704}\\
&\uppercase\expandafter{\romannumeral3} &{58.9160} &{56.4631}	&{54.2282}	&{59.9397}	&{60.7958}	&{57.5608} &{58.4340}	   &\textbf{61.6690}\\
&\uppercase\expandafter{\romannumeral4} &{59.4685} &{57.4581}	&{54.7294}	&{60.3342}	&{61.5622}	&{58.4213} &{59.1584}	   &\textbf{62.9552}\\
\hline
\end{tabular}
\end{table*}

\begin{table*}[th]
\centering
\caption{The SSIM comparison values for Fig. \ref{TU4}.
 Types of noise: I- Gaussian noise, II- Poisson noise, III- Salt \& Pepper noise and IV- Speckle noise.}
\label{tabS-4}
\begin{tabular}{cccccccccc}
\hline
&  &QDPC \cite{B26} &QDPA \cite{B26} &Canny \cite{Canny1986} &Sobel \cite{Sobel} &Prewitt \cite{Prewitt}  &DPC \cite{DPC}  &MDPC \cite{MDPC}   &Ours\\
\hline
&\uppercase\expandafter{\romannumeral1} &{0.5250} &{0.3732}	&{0.1650}	&{0.5817}	&{0.6089}   &{0.4226} &{0.4711}	   &\textbf{0.7276}\\
&\uppercase\expandafter{\romannumeral2} &{0.5292} &{0.4451}	&{0.3232}   &{0.6508}	&{0.6642}   &{0.4768} &{0.5207}	   &\textbf{0.8188}\\
&\uppercase\expandafter{\romannumeral3} &{0.3871} &{0.2828}	&{0.0920}	&{0.3426}	&{0.3897}	&{0.2471} &{0.3543}	   &\textbf{0.6602}\\
&\uppercase\expandafter{\romannumeral4} &{0.5160} &{0.3899}	&{0.1673}	&{0.4769}	&{0.5877}	&{0.4103} &{0.4779}	   &\textbf{0.7536}\\
\hline
\end{tabular}
\end{table*}

 Using (\ref{proof31}), (\ref{proof32}) and (\ref{proof33}), a direct computation shows that
\begin{equation}\label{proof34}
\begin{aligned}
 &\sup\limits_{\substack{s_1>0 \\
 s_2>0}}\int_{\mathbb{R}^2} |f_H(x_1+ s_1\mathbf{i}, x_2+s_2\mathbf{j} )|^{2}dx_{1}dx_{2} \\
 \leq&   \sup\limits_{\substack{s_1>0 \\  s_2>0}} 16\int_{\mathbb{R}^{2}}|(\mathcal{F}{f})(w_1,w_2)|^2dw_1dw_2
 < \infty.
\end{aligned}
\end{equation}
The proof is complete.
\end{proof}

\subsection{Color edge detection algorithm}
\label{Pro-CEDA}
In this section, the edge detector based on QHF  are described.
{First, we obtain the high-dimensional analytic signal of the original image through QHF. Second, the obtained high-dimensional analytic signal is used as the input of IDZ gradient operator. Finally, we get the result of edge detection.}
Let us now give the details of the quaternion Hardy filter based algorithm. They are divided by the following steps.

\begin{itemize}
  \item []
  {\bf{Step} 1}. {Given an input digital color image $f$ of size $M\times N$. Associate it with a $\mathbb{H}$-valued signal}
  \begin{eqnarray}\label{f}
  f=f_1\mathbf{i}+f_2\mathbf{j}+f_3\mathbf{k},
  \end{eqnarray}
  where $f_1,f_2$ and $f_3$ represent respectively the red, green and blue components of the given color image.
  \item []
  {\bf{Step} 2}. Compute the DQFT of the $f$ using equation (\ref{DQFT}). The result will be $\mathcal{F}_D[f]$.
  \item []
  {\bf{Step} 3}. For fixed $s_1>0, s_2>0$ (the values of parameters $s_1$ and $s_2$ of the original image ranged from 1.0 to 2.0, and those of the noisy image ranged from 1.0 to 8.0, multiplying $\mathcal{F}_D[f]$ by the system function (\ref{filter2}) of the QHF. Then we obtain the DQFT of $f_H$ which has the following form
  \begin{equation}
  \begin{aligned}
  &(\mathcal{F}_D[f_H])(w_1,w_2,s_1,s_2)\\
  =& [1+\mbox{sgn}(w_1)]
  [1+\mbox{sgn}(w_2)]e^{-\mid{w_1}\mid{s_1}}e^{-\mid{t_2}\mid{s_2}}\\
  &(\mathcal{F}_D{f})(w_1,w_2).
  \end{aligned}
  \end{equation}

  This is the most significant step in our algorithm, because it allows the advantages of QHF to be presented.

  \item []
  {\bf{Step} 4}. Compute the inverse DQFT for  $\mathcal{F}_D[f_H]$ by applying equation (\ref{IDQFT}), we obtain $f_H$.
  \item []
  {\bf{Step} 5}.  Extract the vector part of $f_H$, we obtain
  \begin{eqnarray}
  \mathbf{Vec}(f_H)=h_1\mathbf{i}+h_2\mathbf{j}+h_3\mathbf{k},
  \end{eqnarray}
  where $h_k$, $k=1,2,3$ are real-valued functions.
 {In the following, we will operate IDZ algorithm based on $\mathbf{Vec}(f_H)$ instead of $f$.}
  \item []
  {\bf{Step} 6}. Perform the IDZ gradient operator to $\mathbf{Vec}(f_H)$. Applying equation (\ref{ABC}), we obtain
  \begin{eqnarray}\label{ABC-h}
  \left\{
  \begin{aligned}
  &A= \sum\limits_{k=1}^{3}(\frac{\partial{h_k}}{\partial{x_1}})^2; \\
  &B= \sum\limits_{k=1}^{3}(\frac{\partial{h_k}}{\partial{x_2}})^2; \\
  &C= \sum\limits_{k=1}^{3}\frac{\partial{h_k}}{\partial{x_1}}\frac{\partial{h_k}}{\partial{x_2}},
  \end{aligned}\right.
  \end{eqnarray}

  then we substitute them into equation (\ref{fMAX}), obtain
  \begin{equation}\label{fMAX1}
  \begin{aligned}
    &\mathbf{Vec}(f_H)_{\max}\\
    =&\frac{1}{2}\bigg(A+C+\sqrt{(A-C)^2+(2B)^2}\bigg).
  \end{aligned}
  \end{equation}

  \item []
  {\bf{Step} 7}. Finally, we obtain the processed result, edge map, by applying the nonmaxmum suppress.

\end{itemize}
{We assume, for simplicity, that $M= N$. It is easy to see that the main computation complexity lies in QFT. According to  \cite{bib19}, the computational complexity of QFT for a $M\times N$ image is about $\mathcal{O}(M^2\log_{2}M)$.
The computational complexity of the remaining steps is approximately $\mathcal{O}(M^3)$. Therefore, the whole computational complexity of the proposed algorithm is about $\mathcal{O}(M^2\log_{2}M)+\mathcal{O}(M^3)$. While the computational cost of MDPC algorithm is about $\mathcal{O}(M^2\log_{2}M)$. The computational complexity of both QDPC and QDPA algorithm is $\mathcal{O}(M^2\log_{2}M)$.}

\section{Experimental results}
\label{sec:Exp}
In this section, we shall demonstrate the effectiveness of the proposed algorithm for color image edge detection.

\begin{table}[th]
\centering
\caption{The experimental setups of the proposed algorithm.}
\label{algorithm}
\begin{tabular}{ccc}
\hline
&Name &Quaternion Hardy filter edge detection algorithm \\
\hline
&Tool: &Matlab R2016b\\
&Dataset 1: &Fig. \ref{TU1}\\
&Dataset 2: &Fig. \ref{TU2}\\
&Dataset 3: &Fig. \ref{TU3}\\
&Dataset 4: &Fig. \ref{TU4}\\
\hline
\end{tabular}
\end{table}

{Here both visual and quantitative analysis for edge detection are considered in our experiments. All experiments are programmed in Matlab R2016b.
To validate the effectiveness of the proposed method, we have carried out verification on 4 datasets (Fig. \ref{TU1}, \ref{TU2}, \ref{TU3}, and \ref{TU4}).
The image in Fig. \ref{TU1} is what we call Classic in this article.
The images in Fig.\ref{TU2} are randomly selected from the public image dataset (namely Synthetic, and from \href{https://urldefense.proofpoint.com/v2/url?u=http-3A__color.univ-2Dlille.fr_datasets_color-2Dedge&d=DwIFaQ&c=KXXihdR8fRNGFkKiMQzstu-8MbOxd1NuZkcSBymGmgo&r=WuCkcl8HOAexOSwshlwJ3w&m=GpWzZCbpvddgVNECF4m_mlofU3pS9zmK0WK-eRKLN2Q&s=_GRVRNnmdJv7DBtB1vHs6n-mw2OpOT-FiCx5e3fl0p4&e=}
{link 2}).
The test images in Fig. \ref{TU3} are randomly selected from the public image dataset (namely CSEE, and from \cite{bibre2}).
The images in Fig. \ref{TU4} are randomly selected from the public image dataset (namely Dataset 4, and from \href{http://decsai.ugr.es/cvg/dbimagenes/}
{link 4}), which has been used by previous researchers.  It consists of 805 test images with 3 different size scales.
Here, the Gaussian filter \cite{gaussian1, gaussian2} is applied to these algorithms (Canny, Sobel, Prewitt, DPC, and MDPC), since they doesn't have the ability of resisting noise.
Digital images distorted with different types of noise such as I- Gaussian noise \cite{gaussian}, II- Poisson noise, III- Salt \& Pepper noise, and IV- Speckle noise. The ideal noiseless and noisy images are both taken into account.}
\subsection{Visual comparisons}
In terms of visual analysis, a color-based method IDZ and seven widely used and noteworthy methods QDPC, QDPA, Canny, Sobel, Prewitt, Differential Phase Congruence (DPC) and Modified Differential Phase Congruence (MDPC)  will be compared with our algorithm.
\subsubsection{Color-based algorithm}
\label{Exp-color}
In this part, we compare the proposed algorithm with the IDZ gradient algorithm. In order to make the experiment more convincing, we used Gaussian filter before IDZ algorithm to achieve the effect of denoising.
Fig. \ref{house1} presents the edge map of the noiseless House image, while Fig. \ref{house2} presents the edge map of the House image corrupted with four different types of noise.
It can be seen from the second row of Fig. \ref{house2} that IDZ gradient algorithm performs well in the first two images of the first line, while poorly in the last two images.
This illustrates that the IDZ gradient algorithm's limitations as a edge detector. The third row of Fig. \ref{house2} shows the detection result of the proposed algorithm. It preserves details more clearly than the second row. It demonstrates that the proposed algorithm gives robust performance compared to that of the IDZ gradient algorithm.

\subsubsection{Grayscale-based algorithms}
\label{Exp-Grayscale}
We compare the performance of the proposed algorithm  with seven widely used and noteworthy algorithms. The  noiseless (Fig. \ref{TU1}, \ref{TU2}, \ref{TU3}, and \ref{TU4}) and noisy images  are both taken into consideration. Here, the commonly used color-to-gray conversion formula \cite{gray1, gray2} is applied in the experiments, which is defined as follows
\begin{equation}
  Gray=0.299*R+0.587*G+0.114*B.
\end{equation}

\begin{itemize}
\item {\bf Noiseless case: }
  {In total, we selected two images from four data sets as test images to show the edge detection effect under noiseless conditions.
  In other words, an image in the first row and second column is selected from Fig. \ref{TU2}, and then an image in the first column of the third row is selected from Fig. \ref{TU3}.
  Fig. \ref{NL0} and Fig. \ref{N30} respectively show the edge detection results of these two image.
  Different rows and columns correspond to the results of different methods. From left to right and top to bottom they are QDPC, QDPA, Canny, Sobel, Prewitt, DPC, MDPC and the proposed algorithms, respectively. The results show that these eight methods are ideal for noiseless images.}
\item {\bf Noisy case: }
  {Here, we add four different noises (I-IV) to each image selected above.
  The edge maps obtained by applying the QDPC, QDPA, Canny, Sobel, Prewitt, DPC, MDPC and the proposed methods to noisy images are shown in Fig. \ref{NL1}, Fig. \ref{NL2}, Fig. \ref{NL3}, Fig. \ref{NL4}, Fig. \ref{N31}, Fig. \ref{N32}, Fig. \ref{N33}, and Fig. \ref{N34}, respectively. we can clearly see that the proposed algorithm is able to extract edge maps from  the noisy images. This means that the proposed algorithm is resistant to the noise.  In particular, it is superior to the other detectors on images with noise III and IV.}
\label{noise}
\end{itemize}

\begin{table}
  \caption{Parameter settings for image Lena in Table \ref{tabP-1} and Table \ref{tabS-1}.}
  \label{pa-1}
  \centering
   \begin{tabular}{c| c c c c c}
\hline
Noise &QDPC \cite{B26}  &QDPA \cite{B26}  &DPC \cite{DPC}   & MDPC \cite{MDPC}  & Ours                 \\
\hline
\uppercase\expandafter{0}                &$(1.0,1.0)$ &$(1.0,1.0)$ &$0.5$    &$0.5$    &$(2.0,2.0)$\\
\uppercase\expandafter{\romannumeral1}   &$(4.5,4.5)$ &$(4.0,4.0)$ &$4.0$  &$3.5$ & $(7.0,7.0)$   \\
\uppercase\expandafter{\romannumeral2}   &$(3.5,3.5)$ &$(3.0,3.0)$ &$2.0$  &$2.5$ & $(6.0,6.0)$    \\
\uppercase\expandafter{\romannumeral3}   &$(4.5,4.5)$ &$(4.5,4.5)$ &$5.0$  &$4.5$ & $(7.0,7.0)$    \\
\uppercase\expandafter{\romannumeral4}   &$(4.5,4.5)$ &$(4.5,4.5)$ &$5.0$  &$4.5$ & $(7.0,7.0)$   \\
\hline
\end{tabular}
\end{table}

\begin{table}
  \caption{Parameter settings for image Men in Table \ref{tabP-1} and Table \ref{tabS-1}.}
  \label{pa-2}
  \centering
  \begin{tabular}{c|ccccc}
\hline
Noise &QDPC \cite{B26}  &QDPA \cite{B26}  &DPC \cite{DPC}   & MDPC \cite{MDPC}  & Ours  \\
\hline
\uppercase\expandafter{0}              &$(2.0,2.0)$ &(2.0,2.0) &1.0   &$0.5$    &$(2.0,2.0)$  \\
\uppercase\expandafter{\romannumeral1} &$(6.5,6.5)$ &(6.5,6.5) &5.5   &$3.5$    &$(5.5,5.5)$  \\
\uppercase\expandafter{\romannumeral2} &$(5.5,5.5)$ &(5.5,5.5) &3.0   &$2.5$    &$(5.5,5.5)$   \\
\uppercase\expandafter{\romannumeral3} &$(7.0,7.0)$ &(7.5,7.5) &6.0   &$4.5$    &$(5.5,5.5)$    \\
\uppercase\expandafter{\romannumeral4} &$(6.5,6.5)$ &(6.5,6.5) &5.5   &$4.5$    &$(5.5,5.5)$   \\
\hline
\end{tabular}
\end{table}

\begin{table}
\caption{Parameter settings for image House in Table \ref{tabP-1} and Table \ref{tabS-1}.}
\label{pa-3}
  \centering
  \begin{tabular}{c|ccccc}
\hline
Noise &QDPC \cite{B26}  &QDPA  \cite{B26} &DPC \cite{DPC}   & MDPC \cite{MDPC}  & Ours  \\
\hline
\uppercase\expandafter{0}              &$(2.0,2.0)$  &(2.0,2.0)  &0.5  &$0.5$  &$(2.0,2.0)$   \\
\uppercase\expandafter{\romannumeral1} &$(7.0,7.0)$  &(7.0,7.0)  &4.0  &$3.5$  &$(8.0,8.0)$   \\
\uppercase\expandafter{\romannumeral2} &$(6.5,6.5)$  &(6.5,6.5)  &2.5  &$2.5$  &$(6.0,6.0)$   \\
\uppercase\expandafter{\romannumeral3} &$(8.0,8.0)$  &(8.0,8.0)  &5.0  &$4.5$  &$(8.0,8.0)$    \\
\uppercase\expandafter{\romannumeral4} &$(7.0,7.0)$  &(7.5,7.5)  &5.0  &$4.5$  &$(8.0,8.0)$    \\
\hline
\end{tabular}
\end{table}

\begin{table}
\caption{Parameter settings for image T1 in Table \ref{tabP-1} and Table \ref{tabS-1}.}
\label{pa-4}
  \centering
  \begin{tabular}{c|ccccc}
\hline
Noise &QDPC \cite{B26} &QDPA \cite{B26} &DPC \cite{DPC}   & MDPC \cite{MDPC}  & Ours  \\
\hline
\uppercase\expandafter{0}              &$(2.0,2.0)$  &(2.0,2.0)  &0.5  &$0.5$   &$(2.0,2.0)$   \\
\uppercase\expandafter{\romannumeral1} &$(5.0,5.0)$  &(5.0,5.0)  &3.0  &$2.0$   &$(5.0,5.0)$  \\
\uppercase\expandafter{\romannumeral2} &$(4.5,4.5)$  &(4.5,4.5)  &2.0  &$2.0$   &$(5.0,5.0)$    \\
\uppercase\expandafter{\romannumeral3} &$(6.0,6.0)$  &(6.5,6.5)  &3.5  &$2.0$   &$(5.0,5.0)$    \\
\uppercase\expandafter{\romannumeral4} &$(5.5,5.5)$  &(6.0,6.0)  &3.0  &$2.0$   &$(5.0,5.0)$    \\
\hline
\end{tabular}
\end{table}

\begin{table}
\caption{ Parameter settings for image T2 in Table \ref{tabP-1} and Table \ref{tabS-1}.}
\label{pa-5}
  \centering
  \begin{tabular}{c|ccccc}
\hline
Noise &QDPC \cite{B26} &QDPA \cite{B26}  &DPC \cite{DPC}   & MDPC \cite{MDPC}  & Ours  \\
\hline
\uppercase\expandafter{0}               &$(2.5,2.5)$  &(2.5,2.5)  &0.5  &$0.5$  &$(2.0,2.0)$   \\
\uppercase\expandafter{\romannumeral1}  &$(3.5,3.5)$  &(3.5,3.5)  &0.5  &$0.5$  &$(2.0,2.0)$   \\
\uppercase\expandafter{\romannumeral2}  &$(2.5,2.5)$  &(2.0,2.0)  &0.5  &$0.5$  &$(2.0,2.0)$   \\
\uppercase\expandafter{\romannumeral3}  &$(4.5,4.5)$  &(4.5,4.5)  &0.5  &$0.5$  &$(2.0,2.0)$    \\
\uppercase\expandafter{\romannumeral4}  &$(3.5,3.5)$  &(4.0,4.0)  &0.5  &$0.5$  &$(2.0,2.0)$    \\
\hline
\end{tabular}
\end{table}

\begin{table}
 \caption{Parameter settings for image T3 in Table \ref{tabP-1} and Table \ref{tabS-1}.}
 \label{pa-6}
  \centering
\begin{tabular}{c|ccccc}
\hline
Noise &QDPC \cite{B26}  &QDPA \cite{B26} &DPC \cite{DPC}   & MDPC \cite{MDPC}  & Ours   \\
\hline
\uppercase\expandafter{0}              &$(2.5,2.5)$  &(2.5,2.5) &0.5  &$0.5$  &$(2.0,2.0)$   \\
\uppercase\expandafter{\romannumeral1} &$(6.5,6.5)$  &(8.5,8.5) &8.5  &$7.0$  &$(8.0,8.0)$  \\
\uppercase\expandafter{\romannumeral2} &$(5.5,5.5)$  &(5.5,5.5) &4.5  &$5.5$  &$(6.0,6.0)$   \\
\uppercase\expandafter{\romannumeral3} &$(7.5,7.5)$  &(7.5,7.5) &8.5  &$8.0$  &$(8.0,8.0)$    \\
\uppercase\expandafter{\romannumeral4} &$(7.5,7.5)$  &(7.5,7.5) &8.5  &$8.0$  &$(8.0,8.0)$    \\
\hline
\end{tabular}
\end{table}

\begin{table}
 \caption{Parameter settings for image Cara in Table \ref{tabP-1} and Table \ref{tabS-1}.}
 \label{pa-7}
  \centering
\begin{tabular}{c|ccccc}
\hline
Noise &QDPC \cite{B26}  &QDPA \cite{B26}  &DPC \cite{DPC}   & MDPC \cite{MDPC}  & Ours   \\
\hline
\uppercase\expandafter{0}              &$(2.5,2.5)$  &(2.5,2.5) &0.5  &$0.5$  &$(2.5,2.5)$   \\
\uppercase\expandafter{\romannumeral1} &$(7.5,7.5)$  &(7.5,7.5) &7.5  &$7.0$  &$(8.0,8.0)$  \\
\uppercase\expandafter{\romannumeral2} &$(3.5,3.5)$  &(4.5,4.5) &5.5  &$5.5$  &$(6.0,6.0)$   \\
\uppercase\expandafter{\romannumeral3} &$(8.5,8.5)$  &(8.5,8.5) &8.5  &$8.0$  &$(8.0,8.0)$    \\
\uppercase\expandafter{\romannumeral4} &$(7.5,7.5)$  &(7.0,7.0) &8.5  &$8.0$  &$(8.0,8.0)$    \\
\hline
\end{tabular}
\end{table}

\begin{table}
 \caption{Parameter settings for image Frog in Table \ref{tabP-1} and Table \ref{tabS-1}.}
 \label{pa-8}
  \centering
\begin{tabular}{c|ccccc}
\hline
Noise &QDPC \cite{B26} &QDPA \cite{B26}  &DPC \cite{DPC}    & MDPC \cite{MDPC}  & Ours   \\
\hline
\uppercase\expandafter{0}              &$(2.5,2.5)$  &(2.5,2.5) &0.5  &$0.5$  &$(2.0,2.0)$   \\
\uppercase\expandafter{\romannumeral1} &$(6.5,6.5)$  &(5.5,5.5) &7.5  &$7.0$  &$(8.0,8.0)$  \\
\uppercase\expandafter{\romannumeral2} &$(3.5,3.5)$  &(2.5,2.5) &4.5  &$5.5$  &$(5.0,5.0)$   \\
\uppercase\expandafter{\romannumeral3} &$(7.5,7.5)$  &(7.5,7.5) &8.5  &$8.0$  &$(8.0,8.0)$    \\
\uppercase\expandafter{\romannumeral4} &$(8.0,8.0)$  &(8.0,8.0) &8.5  &$8.0$  &$(8.0,8.0)$    \\
\hline
\end{tabular}
\end{table}

\subsection{Quantitative analysis}
\label{Exp-Quant}

{The {PSNR} \cite{psnr} is a widely used method of objective evaluation of two images. It is based on the error-sensitive image quality evaluation.
In addition, the SSIM \cite{ssim} is a method of comparing two images under the three aspects of brightness, contrast and structure.
Table \ref{pa-1} - \ref{pa-8} gives the parameter settings in the comparison experiment of Dataset 1.}

{To show the accuracy of the proposed edge detector, the PSNR  and SSIM average values of various type of edge detectors on noisy images (I- Gaussian noise, II- Poisson noise, III- Salt \& Pepper noise and IV- Speckle noise) are calculated (Table \ref{tabP-1} - \ref{tabS-4}).
Each value in the table represents the similarity between the edge map of the noisy image and the edge map of the noiseless image. That is, the larger the value, the stronger the denoising ability.}

{Table \ref{tabP-1} shows the average PSNR value of experimental results of Fig. \ref{TU1}. In the case of the noise I, Sobel and QDPC have the highest PSNR values, and the method proposed in this paper ranks the third.
In the case of the noise II, Sobel and Prewitt had the highest PSNR, and the method proposed in this paper still ranked the third.
In the case of noises III and IV, QDPC had the highest PSNR value, and the method proposed in this paper was followed by QDPC in second place.}

{In general, the method proposed in this paper ranks the top three among the eight comparison methods. This shows that these three algorithms can achieve high similarity between the edge map of noisy image and noiseless image. Therefore, from the point of view of PSNR value, these three algorithms have excellent robustness than the others.}

{Table \ref{tabS-1} shows the average SSIM value of experimental results of Fig. \ref{TU1}. From the table, it is clear that the method presented in this article has the highest SSIM value. In addition to the proposed method, the SSIM values of QDPC and Prewitt are also very high.}

{Compared with QDPC, the improvement rate of SSIM under four noise conditions was $17.2\%$, $26.6\%$, $17.4\%$, and $15.6\%$. Compared with Prewitt, the SSIM improvement rates of the four noise conditions were $7.5\%$, $4.8\%$, $26.8\%$, and $4.9\%$,, respectively.
On the whole, using the proposed method to do color edge detection on this type of image, the performance is obviously excellent.}

{Table \ref{tabP-2} shows the average PSNR value of the experimental results of Fig. \ref{TU2}.
There is no doubt that the proposed method has the highest value under all noise conditions.
But it's worth noting that all the differences are small.
After calculation, we found that the improvement rate was less than $1\%$ compared to the second best Canny method. This indicates that, from the perspective of PSNR, the proposed edge detection method has only a slight improvement on Fig. \ref{TU2}.}

{Tables  \ref{tabS-2} shows the SSIM values between the ground truth and the edge maps of the noisy images of Fig. \ref{TU2}. We know that the closer the SSIM value is to 1, the better performance of the algorithm is. From the average SSIM  values in the table, the proposed method was optimal and improved by $9.9\%$, $8.4\%$, $12.3\%$, and $12.6\%$, respectively, compared with the second-place method (MDPC) under four noise conditions.}

{Table \ref{tabP-3} shows the average PSNR value of experimental results of Fig. \ref{TU3}.
The PSNR value of the proposed method is the highest, and is $3.1\%$, $6.3\%$, $3.3\%$, and $2.3\%$ higher than that of the second-place method (Prewitt) under four noise conditions.}

{Table \ref{tabS-3} shows the average SSIM value of experimental results of Fig. \ref{TU3}.
The SSIM value of the proposed method is the highest, and is $40.8\%$, and $41.7\%$, higher than that of the second method (Prewitt) under the noise I and IV conditions. For the noise II and III, the proposed method improved by $30.2\%$, and $81.3\%$, respectively, compared with the second-place method (QDPA).}

{The average PSNR values of the experimental results in Fig. \ref{TU4} are shown in Table \ref{tabP-4}.
The optimal and second-place methods are the proposed method and Prewitt, respectively. Compared with Prewitt, the improvement rates of the proposed method under four noise conditions are $1.1\%$, $3.8\%$, $1.4\%$, and $2.3\%$ respectively.}

{The average SSIM values of the experimental results in Fig. \ref{TU4} are shown in Table \ref{tabS-4}.
The top two methods are the proposed method and Prewitt. Compared with the slight increase of PSNR value, SSIM value was improved by $19.5\%$, $23.3\%$, $69.4\%$, and $28.2\%$ respectively under four noise conditions. This indicates that the proposed method maintains the structure of Fig. \ref{TU4} well in the case of four different types of noise.}
%
%

\section{Conclusion}
\label{sec:Con}

{In this paper, we introduces a novel Quaternion Hardy filter technique for color image processing. With the help of this technique, a color edge detection method is developed.
QHF is not only adopted to feature enhancement but also used to handle different types of noise.
Specifically, the IDZ gradient operator is considered, which are compensated for by QHF. The results of four datasets verify the proposed method.}

{It is noted that the proposed algorithm still has limitation.
The proposed method contains both the advantages of QHF and IDZ, as well as their disadvantages.
In other words, compared with other methods, the computational complexity cost of the proposed method is a little larger.
Using low-rank quaternion tensor techniques \cite{miao} might make our approach a reality in terms of reducing computing costs.
In further work, low computation cost QHF-based edge detection method will be investigated.}

\end{document}